\def\calP{{\mathcal{P}}}
\def\calX{{\mathcal{X}}}
\definecolor{LightGray}{gray}{0.9}
\definecolor{LightCyan}{rgb}{0.88,1,0.8}
\definecolor{gray}{RGB}{128,128,128}
\newcolumntype{M}[1]{>{\centering\arraybackslash}m{#1}}
\newcolumntype{N}{@{}m{0pt}@{}}
\newtheorem{theorem}{Theorem}
\newtheorem{assumption}{Assumption}
\newtheorem{lemma}{Lemma}
\newtheorem{corollary}{Corollary}
\newtheorem{definition}{Definition}
\DeclareMathOperator{\Reg}{Reg}
\DeclareMathOperator{\Dom}{Dom}
\DeclareMathOperator*{\argmin}{arg\, min}
\newenvironment{proof}[1][Proof]%
  {\smallskip\par\noindent\textbf{#1\,:\ }}%
  {\hspace*{\fill} \rule{6pt}{6pt}\smallskip}
\newenvironment{proof*}[1][Proof]%
  {\smallskip\par\noindent\textbf{#1\,:\ }}%
\newlength{\fwidth}\setlength{\fwidth}{0.485\textwidth}%
\icmltitlerunning{Regret and Cumulative Constraint Violation Analysis for Online Convex Optimization with Long Term Constraints}
\begin{document}

\twocolumn[
\icmltitle{Regret and Cumulative Constraint Violation Analysis\\ for Online Convex Optimization with Long Term Constraints}




\begin{icmlauthorlist}
\icmlauthor{Xinlei Yi}{kth}
\icmlauthor{Xiuxian Li}{tj}
\icmlauthor{Tao Yang}{ne}
\icmlauthor{Lihua Xie}{ntu}
\icmlauthor{Tianyou Chai}{ne}
\icmlauthor{Karl H.~Johansson}{kth}
\end{icmlauthorlist}

\icmlaffiliation{kth}{School of Electrical Engineering and Computer Science, and Digital Futures, KTH Royal Institute of Technology, Stockholm, Sweden}
\icmlaffiliation{tj}{Department of Control Science and Engineering, College of Electronics and Information Engineering, Institute for Advanced Study, and Shanghai Research Institute for Intelligent Autonomous Systems, Tongji University, Shanghai, China}
\icmlaffiliation{ne}{State Key Laboratory of Synthetical Automation for Process Industries, Northeastern University, Shenyang, China}
\icmlaffiliation{ntu}{School of Electrical and Electronic Engineering,
Nanyang Technological University, Singapore}

\icmlcorrespondingauthor{Tao Yang}{yangtao@mail.neu.edu.cn}

\icmlkeywords{Cumulative Constraint Violation, Long Term Constraints, Online Convex Optimization, Regret}

\vskip 0.3in
]



\printAffiliationsAndNotice{}  

\begin{abstract}
This paper considers online convex optimization with long term constraints, where constraints can be violated in intermediate rounds, but need to be satisfied in the long run. The cumulative constraint violation is used as the metric to measure constraint violations, which excludes the situation that strictly feasible constraints can compensate the effects of violated constraints. A novel algorithm is first proposed and it achieves an $\mathcal{O}(T^{\max\{c,1-c\}})$ bound for static regret and an $\mathcal{O}(T^{(1-c)/2})$ bound for cumulative constraint violation, where $c\in(0,1)$ is a user-defined trade-off parameter, and thus has improved performance compared with existing results. Both static regret and cumulative constraint violation bounds are reduced to $\mathcal{O}(\log(T))$ when the loss functions are strongly convex, which also improves existing results. 
In order to achieve the optimal regret with respect to any comparator sequence, another algorithm is then proposed and it achieves the optimal $\mathcal{O}(\sqrt{T(1+P_T)})$ regret and an $\mathcal{O}(\sqrt{T})$ cumulative constraint violation, where $P_T$ is the path-length of the comparator sequence. Finally, numerical simulations are provided to illustrate the effectiveness of the theoretical results.
\end{abstract}

\section{Introduction}

Online convex optimization is a promising learning framework for modeling sequential tasks and has important applications in online binary classification \cite{crammer2006online}, online display advertising \cite{goldfarb2011online}, etc.
It has been studied since the 1990's \cite{cesa1996worst,gentile1999linear,gordon1999regret,zinkevich2003online,hazan2007logarithmic,
agarwal2010optimal,shalev2012online,jadbabaie2015online,hazan2016introduction,zhang2018dynamic,pmlr-v124-zhang20a,ijcai2020-731}.
Online convex optimization can be understood as a repeated game between a learner and an adversary \cite{shalev2012online}.  At round $t$ of the game, the learner selects a point $x_t$ from a known closed convex set $\mathcal{X}\subseteq\mathbb{R}^{p}$, and the adversary chooses a convex loss function $f_t:\mathcal{X}\rightarrow \mathbb{R}$. After that, the loss function $f_t$ is revealed to the learner who suffers a loss $f_t(x_t)$. Note that at each round the loss function can be arbitrarily chosen by the adversary, especially with no probabilistic model imposed on the choices. This is the key difference between online and stochastic convex optimization.  The goal of the learner is to choose a sequence $x_{[T]}=(x_1,\dots,x_T)$ such that her regret
\begin{align}\label{online_op:reg}
\Reg(x_{[T]},y_{[T]}):=\sum_{t=1}^{T}f_t(x_{t})-\sum_{t=1}^{T}f_t(y_{t})
\end{align}
is minimized, where $T$ is the total number of rounds and $y_{[T]}=(y_1,\dots,y_T)$ is a  comparator sequence. In the literature, there are two commonly used comparator sequences. One is the optimal dynamic decision sequence $y_{[T]}=x^*_{[T]}=(x^*_{1},\dots,x^*_{T})$ solving the following constrained convex optimization problem when the sequence of loss functions is known a priori:
\begin{align*}
\min_{x_{[T]}\in \mathcal{X}^T}\sum_{t=1}^{T} f_t(x_t).
\end{align*}
In this case, $\Reg(x_{[T]},x^*_{[T]})$ is called the dynamic regret.
The other comparator sequence is the optimal static decision sequence $y_{[T]}=\check{x}^*_{[T]}=(\check{x}^*_T,\dots,\check{x}^*_T)$, where $\check{x}^*_T$ is the optimal static decision solving
\begin{align*}
\min_{x\in \mathcal{X}}\sum_{t=1}^{T} f_t(x).
\end{align*}
In this case, $\Reg(x_{[T]},\check{x}^*_{[T]})$ is called the static regret. 
In online convex optimization, we are usually interested in finding an upper bound on the worst case regret of an
algorithm. Intuitively, an algorithm performs well if its static regret is sublinear as
a function of $T$, since this implies that on the average the algorithm performs as well as the best fixed strategy in hindsight as $T$ goes to infinity \cite{shalev2012online,hazan2016introduction}.

It is known that the simple and popular projection-based online gradient descent algorithm
\begin{align}\label{online_op:intro_ogd_alg}
x_{t+1}=\calP_{\mathcal{X}}(x_t-\alpha \nabla f_t(x_t)),
\end{align}
where $\calP_{\mathcal{X}}(\cdot)$ is the projection onto the closed convex set $\mathcal{X}$ and $\alpha>0$ is the stepsize, achieves an $\mathcal{O}(\sqrt{T})$ static regret bound for loss functions with bounded subgradients \cite{zinkevich2003online}, i.e.,
\begin{align*}
\Reg(x_{[T]},\check{x}^*_{[T]})=\mathcal{O}(\sqrt{T}).
\end{align*}
It was later shown that $\mathcal{O}(\sqrt{T})$ is a tight bound up to constant factors \cite{hazan2007logarithmic}. The static regret bound can be reduced under more stringent strong convexity conditions on the loss functions \cite{hazan2007logarithmic,shalev2012online,hazan2016introduction}. When the feasible region is bounded, it was also shown in \citet{zinkevich2003online} that the algorithm \eqref{online_op:intro_ogd_alg} achieves the following regret bound
\begin{align*}
\Reg(x_{[T]},y_{[T]})=\mathcal{O}(\sqrt{T}(1+P_T)),
\end{align*}
where
\begin{align*}
P_T=\sum_{t=1}^{T-1}\|y_{t+1}-y_t\|
\end{align*}
is the path-length of the comparator sequence $y_{[T]}$. By running the projection-based online gradient descent algorithm \eqref{online_op:intro_ogd_alg} $\mathcal{O}(\log(T))$ times in parallel with different stepsizes and choosing the smallest regret through an expert-tracking algorithm, the optimal regret bound
\begin{align*}
\Reg(x_{[T]},y_{[T]})=\mathcal{O}(\sqrt{T(1+P_T)})
\end{align*}
was achieved in \citet{zhang2018adaptive}. The curvature of loss functions, such as strong convexity and smoothness, can be used to reduce the regret bound \cite{mokhtari2016online,zhang2017improved,pmlr-v97-zhang19,zhao2020dynamic,zhao2020improved}.

Despite the simplicity of the algorithm \eqref{online_op:intro_ogd_alg}, its computational cost  is crucial for its applicability. The projection $\calP_{\mathcal{X}}(\cdot)$ is easy to compute and even has a closed form solution when $\mathcal{X}$ is a simple set, e.g., a box or a ball. However, in practice, the constraint set $\mathcal{X}$ is often complex. For example, if $\mathcal{X}$ is characterized by inequalities as $\mathcal{X}=\{x:~g(x)\le\bm{0}_{m},~x\in\mathbb{X}\}$, where $\mathbb{X}\subseteq\mathbb{R}^{p}$ is a closed convex set and $g(x)=(g_1(x),\dots,g_m(x))^\top$ with each $g_i:\mathbb{R}^{p}\rightarrow \mathbb{R}$ being a convex function, then the projection $\calP_{\mathcal{X}}(\cdot)$ yields a heavy computational burden. To tackle this challenge, online convex optimization with long term constraints was considered in \citet{mahdavi2012trading}. In this case, instead of requiring $g(x_t)\le\bm{0}_{m}$ at each round, the constraint should only be satisfied in the long run. More specifically, the constraint violation
\begin{align}\label{online_op:intro_def_cons}
\Big\|\Big[\sum_{t=1}^Tg(x_{t})\Big]_+\Big\|
\end{align}
should grow sublinearly, where $[\cdot]_+$ is the projection onto the nonnegative space. In other words, the learner is allowed sometimes to make decisions that do not belong to the set $\mathcal{X}$, but the overall sequence of chosen decisions must obey the constraint at the end by a vanishing convergence rate. This problem is normally solved by online primal--dual algorithms \cite{mahdavi2012trading,jenatton2016adaptive,NIPS2018_7852,yu2020lowJMLR}. The problem can be extended to the case where the constraint functions are time-varying and revealed to the learner after her decision is chosen \cite{sun2017safety,chen2017online,neely2017online,yu2017online}. The problem can also be extended to distributed settings \cite{Li2018distributed,yi2020distributed,yi2019distributed,
yi2021regret,yuan2021distributed,yuan2021distributedtac}.

The constraint violation metric defined in \eqref{online_op:intro_def_cons} allows inequality constraint violations at many rounds as long as they are compensated by a strictly feasible constraint that has a large margin, since it takes the summation over rounds before the projection operation $[\cdot]_+$. In this way, although the constraint violation grows sublinearly, the constraints may not be satisfied at many time instances, which will restrict the theoretical results only to applications where the constraints have cumulative nature. This motivates researchers to consider stricter forms of constraint violation metric. In \citet{NIPS2018_7852}, the cumulative constraint violation
\begin{align}\label{online_op:regc1}
\Big\|\sum_{t=1}^T[g(x_{t})]_+\Big\|
\end{align}
and the cumulative squared constraint violation
\begin{align}\label{online_op:regcsq}
\sum_{t=1}^T\|[g(x_{t})]_+\|^2
\end{align}
are considered. Both forms of metric \eqref{online_op:regc1} and \eqref{online_op:regcsq} take into account all constraints that are not satisfied, thus both are stricter than the constraint violation metric defined in \eqref{online_op:intro_def_cons}. In this paper, we consider a variant of cumulative constraint violation
\begin{align}\label{online_op:regc}
\sum_{t=1}^T\|[g(x_{t})]_+\|,
\end{align}
which is equivalent to metric \eqref{online_op:regc1} due to
\begin{align*}
&\Big\|\sum_{t=1}^T[g(x_{t})]_+\Big\|\le
\sum_{t=1}^T\|[g(x_{t})]_+\|\le\sum_{t=1}^T\|[g(x_{t})]_+\|_1\\
&=\Big\|\sum_{t=1}^T[g(x_{t})]_+\Big\|_1
\le\sqrt{m}\Big\|\sum_{t=1}^T[g(x_{t})]_+\Big\|.
\end{align*}
Moreover, it is straightforward to see that the cumulative constraint violation \eqref{online_op:regc} is stricter than the cumulative squared constraint violation \eqref{online_op:regcsq} when the constraint functions are bounded.
\begin{savenotes}
\begin{table*}[htbp]
\caption{Comparison of this paper to related works on online convex optimization with long term constraints.}
\label{online_op::table}
\begin{center}
\begin{small}
\begin{tabular}{M{1.65cm}|M{2.05cm}|M{2.1cm}|M{2.15cm}|M{2.55cm}|M{4.1cm}N}
\hline
Reference&Loss functions&Static regret&Regret&Constraint violation&Cumulative constraint violation&\\[7pt]

\hline
\cite{mahdavi2012trading}&Convex&$\mathcal{O}(\sqrt{T})$& Not given& $\mathcal{O}(T^{3/4})$&Not given&\\[13pt]

\hline
\multirow{2}{*}[-4pt]{\parbox{1.5cm}{\centering \cite{jenatton2016adaptive}}}&Convex&$\mathcal{O}(T^{\max\{c,1-c\}})$&\multirow{2}{*}[-4pt]{\parbox{2.2cm}{\centering Not given}}& \multirow{2}{*}[-4pt]{\parbox{2.6cm}{\centering $\mathcal{O}(T^{1-c/2})$}}&\multirow{2}{*}[-4pt]{\parbox{4.1cm}{\centering Not given}}&\\[7pt]
\cline{2-3}
&Strongly convex&$\mathcal{O}(T^{c})$& & &&\\[7pt]

\hline
\multirow{2}{*}[-0pt]{\parbox{1.5cm}{\centering \cite{NIPS2018_7852}}}&Convex&$\mathcal{O}(T^{\max\{c,1-c\}})$&\multirow{2}{*}[-4pt]{\parbox{2.2cm}{\centering Not given}}& \multicolumn{2}{c}{$\mathcal{O}(T^{1-c/2})$}&\\[7pt]
\cline{2-3}\cline{5-6}
&Strongly convex&$\mathcal{O}(\log(T))$&& \multicolumn{2}{c}{$\mathcal{O}(\sqrt{\log(T)T})$}&\\[7pt]

\hline
\cite{yu2020lowJMLR}&Convex&$\mathcal{O}(\sqrt{T})$&Not given& $\mathcal{O}(T^{1/4})$\footnote{This bound is reduced to $\mathcal{O}(1)$ if the constraint functions satisfy the Slater condition.} &Not given&\\[13pt]

\hline
\multirow{2}{*}[-4pt]{Algorithm~\ref{online_op:algorithm2}}&Convex&$\mathcal{O}(T^{\max\{c,1-c\}})$
&$\mathcal{O}(\sqrt{T}(1+P_T))$& \multicolumn{2}{c}{$\mathcal{O}(T^{(1-c)/2})$}&\\[7pt]
\cline{2-6}
&Strongly convex&$\mathcal{O}(\log(T))$& Not given& \multicolumn{2}{c}{$\mathcal{O}(\log(T))$}&\\[7pt]

\hline
Algorithm~\ref{online_op:algorithm}&Convex&$\mathcal{O}(T^{\max\{c,1-c\}})$
& $\mathcal{O}(\sqrt{T(1+P_T)})$&\multicolumn{2}{c}{$\mathcal{O}(\sqrt{T})$}&\\[7pt]
\hline
\end{tabular}
\end{small}
\end{center}
\vskip -0.1in
\end{table*}
\end{savenotes}

\noindent {\bf  Contributions}: This paper first proposes a novel algorithm (Algorithm~\ref{online_op:algorithm2}) for the problem of online convex optimization with long term constraints. This algorithm achieves an $\mathcal{O}(T^{\max\{c,1-c\}})$ static regret bound and an $\mathcal{O}(T^{(1-c)/2})$ cumulative constraint violation bound, where $c\in(0,1)$ is a user-defined trade-off parameter, and hence yields improved performance compared with the results in \citet{mahdavi2012trading,jenatton2016adaptive,NIPS2018_7852,yu2020lowJMLR}. This algorithm is inspired by \citet{yu2020lowJMLR} and is the first to achieve a cumulative constraint violation bound strictly better than $\mathcal{O}(T^{3/4})$ while maintaining $\mathcal{O}(\sqrt{T})$ regret for convex loss functions. Both static regret and cumulative constraint violation bounds are reduced to $\mathcal{O}(\log(T))$ when the loss functions are strongly convex, which also improves the results in \citet{jenatton2016adaptive,NIPS2018_7852}. This algorithm is also the first to achieve a cumulative constraint violation bound strictly better than $\mathcal{O}(\sqrt{\log(T)T})$ while maintaining $\mathcal{O}(\log(T))$ regret for strongly convex loss functions.

In order to achieve the optimal regret with respect to any comparator sequence, another algorithm (Algorithm~\ref{online_op:algorithm}) is then proposed and it achieves the optimal $\mathcal{O}(\sqrt{T(1+P_T)})$ regret and an $\mathcal{O}(\sqrt{T})$ cumulative constraint violation. This algorithm is inspired by \citet{zhang2018adaptive}. The basic idea of the second algorithm is to run the first algorithm multiple times in parallel, each with a different stepsize that is optimal for a specific path-length, and then to combine them with an expert-tracking algorithm. This algorithm is the first to avoid computing the projection $\calP_{\mathcal{X}}(\cdot)$ by considering long term constraints while maintaining the optimal regret and sublinear cumulative constraint violation.

In summary, the presented results are significant theoretical developments compared to prior works. The comparison of this paper to related studies in the literature is summarized in Table~\ref{online_op::table}. Specifically, \citet{mahdavi2012trading} achieved an $\mathcal{O}(\sqrt{T})$ static regret bound and an $\mathcal{O}(T^{3/4})$ constraint violation bound. \citet{jenatton2016adaptive} achieved an $\mathcal{O}(T^{\max\{c,1-c\}})$ static regret bound and an $\mathcal{O}(T^{1-c/2})$ constraint violation bound, which generalized the results in \citet{mahdavi2012trading}, and the static regret bound was reduced to $\mathcal{O}(T^{c})$ when the loss functions are strongly convex. \citet{NIPS2018_7852} achieved an $\mathcal{O}(T^{\max\{c,1-c\}})$ static regret bound and an $\mathcal{O}(T^{1-c/2})$ cumulative constraint violation bound, which further generalized the results in \citet{jenatton2016adaptive} by using the stricter constraint violation metric, and these two bounds were respectively reduced to $\mathcal{O}(\log(T))$ and  $\mathcal{O}(\sqrt{\log(T)T})$ when the loss functions are strongly convex, which improved the results in \citet{jenatton2016adaptive}. \citet{yu2020lowJMLR} achieved an $\mathcal{O}(\sqrt{T})$ static regret bound and an $\mathcal{O}(T^{1/4})$ constraint violation bound, which improved the results in \citet{mahdavi2012trading,jenatton2016adaptive}.

\noindent {\bf  Outline}: The rest of this paper is organized as follows. Section~\ref{online_opsec:problem} formulates the considered problem.  Section~\ref{online_opsec:algorithm2} proposes two algorithms to solve the problem and analyze their regret and cumulative constraint violation bounds. Section~\ref{online_opsec:simulation} gives numerical simulations. Finally, Section~\ref{online_opsec:conclusion} concludes the paper and proofs are given in Appendix.

\noindent {\bf Notations}: All inequalities and equalities throughout this paper are understood componentwise. $\calX^T$ is the $T$-fold Cartesian product of a set $\calX$. $\mathbb{R}^p$ and $\mathbb{R}^p_+$ stand for the set of $p$-dimensional vectors and nonnegative vectors, respectively. $\mathbb{N}_+$ denotes the set of all positive integers. $[T]$ represents the set $\{1,\dots,T\}$ for any positive integer $T$. $\|\cdot\|$ ($\|\cdot\|_1$) represents the Euclidean norm (1-norm) for vectors and the induced 2-norm (1-norm) for matrices. $x^\top$ denotes the transpose of a vector or a matrix.   $\langle x,y\rangle$ represents the standard inner product of two vectors $x$ and $y$. ${\bf 0}_m$ denotes the column zero
vector with dimension $m$.  $[z]_+$ represents the component-wise projection of a vector $z\in\mathbb{R}^p$ onto $\mathbb{R}^p_+$. $\lceil \cdot\rceil$ and $\lfloor\cdot\rfloor$ denote the ceiling and floor functions, respectively.

\section{Problem Formulation}\label{online_opsec:problem}

\subsection{Basic Definitions}
\begin{definition}
Let $f:\Dom\rightarrow\mathbb{R}$ be a function, where the set $\Dom\subset\mathbb{R}^p$. A vector $g\in\mathbb{R}^p$ is called a subgradient of function $f$ at point $x\in\Dom$ if
\begin{align}\label{online_op:subgradient}
f(y)\ge f(x)+\langle g,y-x\rangle,~\forall y\in\Dom.
\end{align}
\end{definition}
Throughout this paper, we use $\partial f(x)$ to denote the subgradient of $f$ at $x$.
Similarly, for a vector function $\tilde{f}=(f_1,\dots,f_m)^\top:\Dom\rightarrow\mathbb{R}^m$, its subgradient at point $x\in\Dom$ is denoted as
\begin{align*}
\partial \tilde{f}(x)=\left[\begin{array}{c}(\partial f_1(x))^\top\\
(\partial f_2(x))^\top\\
\vdots\\
(\partial f_m(x))^\top
\end{array}\right]\in\mathbb{R}^{m\times p}.
\end{align*}
Moreover, it is straightforward to check that $\partial [f(x)]_+$ is the subgradient of $[f]_+$ at $x$, where
\begin{align*}
\partial [f(x)]_+=
\begin{cases}
  \bm{0}_p, & \mbox{if } f(x)<0 \\
  \partial f(x), & \mbox{otherwise}.
\end{cases}
\end{align*}

\subsection{Problem Formulation}
This paper considers the problem of online convex optimization with long term constraints.
Let $\mathbb{X}\subseteq\mathbb{R}^p$ be the constrained set and $g:\mathbb{X}\rightarrow \mathbb{R}^{m}$ be the constrained function, where $p$ and $m$ are positive integers. Both $\mathbb{X}$ and $g$ are known in advance. Suppose $\mathcal{X}=\{x:~g(x)\le\bm{0}_{m},~x\in\mathbb{X}\}$ is non-empty. Let $\{f_{t}:\mathbb{X}\rightarrow \mathbb{R}\}$ be a sequence of loss functions and each $f_t$ is unknown until the end of round $t$. The goal of this paper is to propose online algorithms to choose $x_t\in\mathbb{X}$ for each round $t$ such that both  regret and  cumulative constraint violation grow sublinearly with respect to the total number of rounds $T$.


We make the following standing assumptions on the loss and constraint functions.

\begin{assumption}\label{online_op:assfunction}
The set $\mathbb{X}$ is convex and closed. The functions $f_{t}$ and $g$ are convex.
\end{assumption}

\begin{assumption}
There exists a positive constant $F$ such that
      \begin{align}
      |f_{t}(x)-f_{t}(y)|\le F,~\|g(x)\|\le F,~\forall t\in\mathbb{N}_+,~x,~y\in \mathbb{X}.\label{online_op:ftgtupper}
      \end{align}
\end{assumption}

\begin{assumption}\label{online_op:ass_subgradient}
The subgradients $\partial f_{t}(x)$ and $\partial g(x)$ exist. Moreover, they are uniformly bounded on $\mathbb{X}$, i.e., there exists a positive constant $G$ such that
  \begin{align}\label{online_op:subgupper}
  \|\partial f_{t}(x)\|\le G,~
  \|\partial g(x)\|\le G,~\forall t\in\mathbb{N}_+,~x\in \mathbb{X}.
  \end{align}
\end{assumption}

From \eqref{online_op:ftgtupper}, we know that the cumulative constraint violation \eqref{online_op:regc} is stricter than the cumulative squared constraint violation \eqref{online_op:regcsq} due to
\begin{align*}
\sum_{t=1}^T\|[g(x_{t})]_+\|^2\le F\sum_{t=1}^T\|[g(x_{t})]_+\|.
\end{align*}

\section{Main Results}\label{online_opsec:algorithm2}
In this section, we propose two novel algorithms for the constrained online convex optimization problem  formulated in Section \ref{online_opsec:problem},  and analyze their regret and cumulative constraint violation bounds.

\subsection{The Basic Approach}

The basic approach is summarized in Algorithm~\ref{online_op:algorithm2}, which is inspired by Algorithm~1 proposed in \citet{yu2020lowJMLR}. The key difference between Algorithm~\ref{online_op:algorithm2} and the algorithm proposed in \citet{yu2020lowJMLR} is that we use the clipped constraint function $[g]_+$ to replace the original constraint function $g$. With this modification, we can analyze constraint violations under  stricter forms of metric. In addition to using the clipped constraint function, Algorithm~\ref{online_op:algorithm2} also has time-varying algorithm parameters. This enables us to consider the nontrivial extensions, such as strongly convex loss functions and the general dynamic regret, which have not been studied in \citet{yu2020lowJMLR}.

\begin{algorithm}
\caption{}
\begin{algorithmic}\label{online_op:algorithm2}
\STATE \textbf{Input}:   non-increasing sequence $\{\alpha_t\}\subseteq(0,+\infty)$ and non-decreasing sequence $\{\gamma_t\}\subseteq(0,+\infty)$.
\STATE \textbf{Initialize}:  $q_{0}={\bm 0}_{m}$ and $x_{1}\in\mathbb{X}$.
\FOR{$t=2,\dots$}
\STATE  Observe $\partial f_{t-1}(x_{t-1})$.
\STATE  Update \begin{align}
q_{t-1}&=q_{t-2}+\gamma_{t-1}[g(x_{t-1})]_+,\label{online_op:al_q2}\\
\hat{q}_{t-1}&=q_{t-1}+\gamma_{t-1}[g(x_{t-1})]_+,\label{online_op:al_qhat2}\\
       x_{t}&=\argmin_{x\in\mathbb{X}}\{\alpha_{t-1}\langle\partial f_{t-1}(x_{t-1}), x\rangle\nonumber\\
       &\quad+\alpha_{t-1}\gamma_t\langle\hat{q}_{t-1}, [g(x)]_+\rangle+\|x-x_{t-1}\|^2\}.\label{online_op:al_x2}
       \end{align}
\ENDFOR
\STATE  \textbf{Output}: $\{x_{t}\}$.
\end{algorithmic}
\end{algorithm}

In the following, we analyze regret and cumulative constraint violation bounds for Algorithm~\ref{online_op:algorithm2}.
We first provide regret and cumulative constraint violation bounds for the general cases in the following lemma.
\begin{lemma}\label{online_op:theoremreg_alg2}
Suppose Assumptions~\ref{online_op:assfunction}--\ref{online_op:ass_subgradient} hold. Let $\{x_{t}\}$ be the sequence generated by Algorithm~\ref{online_op:algorithm2} with $\gamma_t=\gamma_0/\sqrt{\alpha_t}$, where $\gamma_0\in(0,1/(\sqrt{2}G)]$ is a constant. Then, for any comparator sequence $y_{[T]}\in\calX^{T}$,
\begin{align}
&\Reg(x_{[T]},y_{[T]})\le  \sum_{t=1}^T\Delta_{t}(y_t)
+\sum_{t=1}^{T}\frac{G^2\alpha_{t}}{2},\label{online_op:theoremregequ_alg2}\\
&\sum_{t=1}^{T} \|[g(x_{t})]_+\|\nonumber\\
&\le\sqrt{m}\Big(\frac{1}{\gamma_T}\|q_{T}\|
+\sum_{t=1}^{T-1}\Big(\frac{1}{\gamma_{t}}-\frac{1}{\gamma_{t+1}}\Big)\|q_{t}\|\Big),
\label{online_op:theoremconsequ_alg2}\\
&\frac{1}{2}\|q_{T+1}\|^2\nonumber\\
&\le  \sum_{t=1}^T\Delta_{t}(y_t)
+\sum_{t=1}^{T}\frac{G^2\alpha_{t}}{2}
+\frac{1}{2}\gamma_1^2F^2+FT\nonumber\\
&\quad
+\sum_{t=1}^T(\gamma_{t+1}-\gamma_{t})^2F^2,\label{online_op:theoremregequ_alg2_g}
\end{align}
where
\begin{align*}
&\Delta_{t}(y_t)=\frac{1}{\alpha_{t}}(\|y_t-x_{t}\|^2-\|y_t-x_{t+1}\|^2).
\end{align*}
\end{lemma}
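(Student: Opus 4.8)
The plan is to reduce all three displays to a single per-round inequality obtained from the optimality of \eqref{online_op:al_x2}. Consider the update that produces $x_{t+1}$, i.e. \eqref{online_op:al_x2} with the loop index shifted by one, whose objective $h_t$ is $2$-strongly convex in $x$: the proximal term $\|x-x_t\|^2$ supplies the modulus, and $x\mapsto\langle\hat q_t,[g(x)]_+\rangle$ is convex because each $[g_i]_+$ is convex and $\hat q_t\ge\bm{0}_m$. Hence the minimizer satisfies $h_t(y)\ge h_t(x_{t+1})+\|y-x_{t+1}\|^2$ for all $y\in\mathbb{X}$. I would evaluate this at a comparator $y=y_t\in\mathcal{X}$, so that $g(y_t)\le\bm{0}_m$ forces $[g(y_t)]_+=\bm{0}_m$ and deletes the dual term on the left; then split $\langle\partial f_t(x_t),x_{t+1}-y_t\rangle=\langle\partial f_t(x_t),x_t-y_t\rangle+\langle\partial f_t(x_t),x_{t+1}-x_t\rangle$, bound the first summand below by $f_t(x_t)-f_t(y_t)$ via convexity and the second by $-G\|x_{t+1}-x_t\|$ via \eqref{online_op:subgupper}, and close with Young's inequality. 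Dividing by $\alpha_t$ yields the key estimate
\begin{align*}
f_t(x_t)-f_t(y_t)+\gamma_{t+1}\langle\hat q_t,[g(x_{t+1})]_+\rangle\le\Delta_t(y_t)+\frac{G^2\alpha_t}{2}-\frac{1}{2\alpha_t}\|x_{t+1}-x_t\|^2.
\end{align*}

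Inequality \eqref{online_op:theoremregequ_alg2} is then immediate: $\hat q_t\ge\bm{0}_m$ and $[g(x_{t+1})]_+\ge\bm{0}_m$ make the inner product nonnegative, so dropping it and the square term and summing over $t$ gives the regret bound. For \eqref{online_op:theoremconsequ_alg2} I would use only the recursion \eqref{online_op:al_q2}: since every increment $\gamma_t[g(x_t)]_+$ is nonnegative, $\{q_t\}$ is componentwise nondecreasing and nonnegative, so $\gamma_t\|[g(x_t)]_+\|_1=\|q_t-q_{t-1}\|_1=\|q_t\|_1-\|q_{t-1}\|_1$. Summing $\|[g(x_t)]_+\|\le\|[g(x_t)]_+\|_1=\tfrac{1}{\gamma_t}(\|q_t\|_1-\|q_{t-1}\|_1)$, applying Abel summation with the nonincreasing weights $1/\gamma_t$ (monotonicity of $\{\gamma_t\}$ is what makes the coefficients $\tfrac{1}{\gamma_t}-\tfrac{1}{\gamma_{t+1}}$ nonnegative), and finishing with $\|\cdot\|_1\le\sqrt{m}\|\cdot\|$, produces \eqref{online_op:theoremconsequ_alg2}.

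The substantive part is \eqref{online_op:theoremregequ_alg2_g}. Setting $d_t:=q_t-q_{t-1}=\gamma_t[g(x_t)]_+$, I note the extrapolation identity $\hat q_t=q_t+d_t$, which is exactly what the doubled clipped term in \eqref{online_op:al_qhat2} produces. Expanding the drift and substituting this identity gives the exact recursion
\begin{align*}
\|q_{t+1}\|^2-\|q_t\|^2=2\gamma_{t+1}\langle\hat q_t,[g(x_{t+1})]_+\rangle+\|d_{t+1}-d_t\|^2-\|d_t\|^2.
\end{align*}
Summing over $t$, I would feed in twice the per-round inequality above to control the term $2\gamma_{t+1}\langle\hat q_t,[g(x_{t+1})]_+\rangle$, using $-(f_t(x_t)-f_t(y_t))\le F$ from \eqref{online_op:ftgtupper}; after the final division by two this accounts for the $\sum\Delta_t(y_t)$, $\sum G^2\alpha_t/2$ and $FT$ terms and leaves a spare $-\tfrac{1}{\alpha_t}\|x_{t+1}-x_t\|^2$ to be used below.

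The hard part is controlling $\|d_{t+1}-d_t\|^2$ so that only $(\gamma_{t+1}-\gamma_t)^2F^2$ survives. I would split $d_{t+1}-d_t=\gamma_t([g(x_{t+1})]_+-[g(x_t)]_+)+(\gamma_{t+1}-\gamma_t)[g(x_{t+1})]_+$ and bound it using that $[\cdot]_+$ is nonexpansive, that \eqref{online_op:subgupper} makes $g$ $G$-Lipschitz, and that $\|[g(x_{t+1})]_+\|\le F$, obtaining $\|d_{t+1}-d_t\|^2\le 2\gamma_t^2G^2\|x_{t+1}-x_t\|^2+2(\gamma_{t+1}-\gamma_t)^2F^2$. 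The decisive point, which I expect to be the main obstacle, is that the $\|x_{t+1}-x_t\|^2$ created here must be absorbed by the spare $-\tfrac{1}{\alpha_t}\|x_{t+1}-x_t\|^2$; this works precisely because $\gamma_t=\gamma_0/\sqrt{\alpha_t}$ with $\gamma_0\le 1/(\sqrt{2}G)$ gives $2\gamma_t^2G^2\le 1/\alpha_t$. Placing $\gamma_t$ (not $\gamma_{t+1}$) on the Lipschitz part of the split is essential: the alternative would require the unavailable stepsize ratio $\alpha_{t+1}/\alpha_t$, since $\{\alpha_t\}$ is only nonincreasing. Discarding the resulting nonpositive $\|x_{t+1}-x_t\|^2$ and $-\|d_t\|^2$ contributions, bounding $\|q_1\|^2\le\gamma_1^2F^2$, and dividing by two then delivers \eqref{online_op:theoremregequ_alg2_g}.
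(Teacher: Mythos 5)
Your proof is correct and follows essentially the same route as the paper: your key per-round estimate is the paper's Lemma~\ref{online_op:lemma_regretdelta_alg2} (obtained there via a three-point projection lemma rather than your equivalent strong-convexity-of-the-objective argument, and with $\hat{q}_t$ expanded as $q_t+\gamma_t[g(x_t)]_+$ rather than kept whole), part (ii) is the paper's argument verbatim, and your exact recursion for $\|q_{t+1}\|^2-\|q_t\|^2$ in terms of $d_t=\gamma_t[g(x_t)]_+$ is precisely the paper's drift identity combined with the polarization step it applies to the cross term $\gamma_{t+1}\gamma_t\langle[g(x_{t})]_+,[g(x_{t+1})]_+\rangle$. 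Every decisive point you isolate --- placing $\gamma_t$ rather than $\gamma_{t+1}$ on the Lipschitz part of the split, absorbing $\gamma_t^2G^2\|x_{t+1}-x_t\|^2$ into the leftover proximal term via $\gamma_0\le 1/(\sqrt{2}G)$, using $-\Reg(x_{[T]},y_{[T]})\le FT$ and $\|q_1\|\le\gamma_1 F$ --- coincides with the paper's proof.
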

\begin{proof}
The proof is given in  Appendix~\ref{online_op:theoremreg_alg2proof}.
\end{proof}

We then present the first main result.
\begin{theorem}\label{online_op:corollaryreg}
Suppose Assumptions~\ref{online_op:assfunction}--\ref{online_op:ass_subgradient} hold. For any $T\in\mathbb{N}_+$, let $x_{[T]}$ be the sequence generated by Algorithm~\ref{online_op:algorithm2} with
\begin{align}\label{online_op:stepsize1}
  \alpha_t=\frac{\alpha_0}{T^{c}},
  ~\gamma_t=\frac{\gamma_0}{\sqrt{\alpha_t}},~\forall t\in[T],
\end{align} where $\alpha_0>0$, $c\in(0,1)$, and $\gamma_0\in(0,1/(\sqrt{2}G)]$ are constants. Then,
\begin{align}
&\Reg(x_{[T]},\check{x}^*_{[T]})
=\mathcal{O}(T^{\max\{c,1-c\}}),\label{online_op:corollaryregequ1}\\
&\sum_{t=1}^T\|[g(x_{t})]_+\|=
\mathcal{O}(T^{(1-c)/2}).
\label{online_op:corollaryconsequ}
\end{align}
\end{theorem}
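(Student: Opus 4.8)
The plan is to specialize the three estimates of Lemma~\ref{online_op:theoremreg_alg2} to the prescribed stepsizes and read off the orders in $T$. The decisive simplification is that, for a fixed horizon $T$, both $\alpha_t\equiv\alpha_0/T^{c}$ and $\gamma_t\equiv\gamma_0\sqrt{T^{c}/\alpha_0}$ are constant in $t$; I denote these common values by $\alpha$ and $\gamma$. This constancy collapses the telescoping sum in \eqref{online_op:theoremregequ_alg2} and annihilates all the difference terms in \eqref{online_op:theoremconsequ_alg2} and \eqref{online_op:theoremregequ_alg2_g}.

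For the static regret I choose the constant comparator $y_t=\check{x}^*_T$ in \eqref{online_op:theoremregequ_alg2}. Since $\alpha_t\equiv\alpha$, the sum $\sum_{t=1}^{T}\Delta_t(\check{x}^*_T)=\frac{1}{\alpha}\sum_{t=1}^{T}(\|\check{x}^*_T-x_t\|^2-\|\check{x}^*_T-x_{t+1}\|^2)$ telescopes to at most $\frac{1}{\alpha}\|\check{x}^*_T-x_1\|^2$, which is $\mathcal{O}(T^{c})$ because $1/\alpha=T^{c}/\alpha_0$ and the squared distance is a constant. The remaining term $\sum_{t=1}^{T}G^2\alpha_t/2=\frac{G^2\alpha_0}{2}T^{1-c}$ is $\mathcal{O}(T^{1-c})$. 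Adding the two yields \eqref{online_op:corollaryregequ1}.

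For the cumulative constraint violation I first note that with $\gamma_t\equiv\gamma$ every coefficient $1/\gamma_t-1/\gamma_{t+1}$ in \eqref{online_op:theoremconsequ_alg2} vanishes, leaving $\sum_{t=1}^{T}\|[g(x_t)]_+\|\le\frac{\sqrt{m}}{\gamma}\|q_T\|$, so it suffices to bound $\|q_T\|$. Because $q_t=q_{t-1}+\gamma_{t-1}[g(x_{t-1})]_+$ with $[g(\cdot)]_+\ge\bm{0}_m$ and $q_0=\bm{0}_m$, the iterates $\{q_t\}$ are nonnegative and nondecreasing componentwise, whence $\|q_T\|\le\|q_{T+1}\|$. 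I then bound $\|q_{T+1}\|$ via \eqref{online_op:theoremregequ_alg2_g} applied to any fixed feasible comparator: the term $\sum_t(\gamma_{t+1}-\gamma_t)^2F^2$ vanishes, $\sum_t\Delta_t$ and $\frac{1}{2}\gamma_1^2F^2$ are $\mathcal{O}(T^{c})$, $\sum_t G^2\alpha_t/2$ is $\mathcal{O}(T^{1-c})$, and $FT$ dominates since $c,1-c<1$. Hence $\|q_{T+1}\|^2=\mathcal{O}(T)$; that is, $\|q_T\|=\mathcal{O}(\sqrt{T})$. Combining this with $1/\gamma=(\sqrt{\alpha_0}/\gamma_0)T^{-c/2}=\mathcal{O}(T^{-c/2})$ gives $\sum_{t=1}^{T}\|[g(x_t)]_+\|=\mathcal{O}(T^{-c/2})\cdot\mathcal{O}(\sqrt{T})=\mathcal{O}(T^{(1-c)/2})$, which is \eqref{online_op:corollaryconsequ}.

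The exponent bookkeeping is routine, so the whole statement is essentially a specialization of Lemma~\ref{online_op:theoremreg_alg2} and the real work sits in that lemma. The only points that need care are: (i) justifying that the telescoped distance $\|\check{x}^*_T-x_1\|^2$ (equivalently a finite diameter of $\mathbb{X}$) is a constant independent of $T$; and (ii) the monotonicity argument that lets me replace $\|q_T\|$ by $\|q_{T+1}\|$. I expect the only mild obstacle to be recognizing that $FT$, not the regret-type terms, is the dominating contribution on the right-hand side of \eqref{online_op:theoremregequ_alg2_g}, so that $\|q_T\|=\mathcal{O}(\sqrt{T})$; the prefactor $1/\gamma=\mathcal{O}(T^{-c/2})$ then sharpens this into the improved $\mathcal{O}(T^{(1-c)/2})$ violation bound.
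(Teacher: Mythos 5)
Your proposal is correct and follows essentially the same route as the paper's own proof in Appendix~C: specialize Lemma~\ref{online_op:theoremreg_alg2} with the constant (in $t$) choices of $\alpha_t,\gamma_t$, telescope $\sum_t\Delta_t(\check{x}^*_T)$ to get the $\mathcal{O}(T^{\max\{c,1-c\}})$ regret, and use \eqref{online_op:theoremregequ_alg2_g} with the dominating $FT$ term to conclude $\|q_T\|=\mathcal{O}(\sqrt{T})$, which combined with $\sqrt{m}/\gamma_T=\mathcal{O}(T^{-c/2})$ in \eqref{online_op:theoremconsequ_alg2} gives the $\mathcal{O}(T^{(1-c)/2})$ violation bound. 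Even your two flagged caveats (treating $\|\check{x}^*_T-x_1\|^2$ as a $T$-independent constant, and passing from $\|q_{T+1}\|$ to $\|q_T\|$ via componentwise monotonicity of $q_t$) are handled identically, if implicitly, in the paper.
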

\begin{proof}
The explicit expressions of the right-hand sides of \eqref{online_op:corollaryregequ1}--\eqref{online_op:corollaryconsequ}, and the proof are given in  Appendix~\ref{online_op:corollaryregproof}.
\end{proof}

Compared with the results that $\Reg(x_{[T]},\check{x}^*_{[T]})
=\mathcal{O}(T^{\max\{c,1-c\}})$  and $\|[\sum_{t=1}^Tg(x_{t})]_+\|=\mathcal{O}(T^{1-c/2})$ achieved in \citet{jenatton2016adaptive},
from \eqref{online_op:corollaryregequ1} and \eqref{online_op:corollaryconsequ}, we know that Algorithm~\ref{online_op:algorithm2} achieves the same static regret bound but a strictly smaller constraint violation bound under the stricter metric. Similarly, compared with the results that $\Reg(x_{[T]},\check{x}^*_{[T]})
=\mathcal{O}(T^{\max\{c,1-c\}})$  and $\|\sum_{t=1}^T[g(x_{t})]_+\|=\mathcal{O}(T^{1-c/2})$ achieved in \citet{NIPS2018_7852},
from \eqref{online_op:corollaryregequ1} and \eqref{online_op:corollaryconsequ}, we know that Algorithm~\ref{online_op:algorithm2} achieves the same static regret bound but a strictly smaller cumulative constraint violation bound.
By setting $c=0.5$ in Theorem~\ref{online_op:corollaryreg}, we have $\Reg(x_{[T]},\check{x}^*_{[T]})=\mathcal{O}(\sqrt{T})$ and $\sum_{t=1}^T\|[g(x_{t})]_+\|=
\mathcal{O}(T^{1/4})$. Thus, the optimal $\mathcal{O}(\sqrt{T})$ regret bound achieved in \citet{zinkevich2003online,mahdavi2012trading,yu2020lowJMLR} is recovered. Moreover, the $\mathcal{O}(T^{1/4})$ constraint violation bound achieved in \citet{yu2020lowJMLR} is improved by using the stricter metric and the $\mathcal{O}(T^{3/4})$ constraint violation bound achieved in \citet{mahdavi2012trading} is not only improved by using the stricter metric but also strictly reduced. However, the authors of \citet{yu2020lowJMLR} also showed that constraint violation bound is reduced to $\mathcal{O}(1)$ if the constraint functions satisfy the Slater condition.
Noting this, it is natural to ask the question whether Algorithm~\ref{online_op:algorithm2} can achieve $\mathcal{O}(1)$ cumulative constraint violation bound if the Slater condition holds. Unfortunately, we have not found a way to show this. The reason is that the clipped constraint functions do not satisfy the Slater condition since they are always nonnegative.

If the loss functions $f_{t}$ are strongly convex, then the static regret and cumulative constraint violation bounds achieved in Theorem~\ref{online_op:corollaryreg} can be reduced. Moreover, the total number of rounds $T$ is not needed.

\begin{corollary}\label{online_op:corollaryreg_sc}
Suppose Assumptions~\ref{online_op:assfunction}--\ref{online_op:ass_subgradient} hold. Moreover, for all $t\in\mathbb{N}_+$, $f_{t}(x)$ are strongly convex over $\mathbb{X}$,  i.e., there exists a constant $\mu>0$, such that for all $x,y\in\mathbb{X}$,
\begin{align}\label{online_op:assstrongconvexequ}
f_{t}(x)\ge f_{t}(y)+\langle x-y,\partial f_{t}(y)\rangle+\mu\|x-y\|^2.
\end{align}
Let $\{x_{t}\}$ be the sequence generated by Algorithm~\ref{online_op:algorithm2} with
\begin{align}\label{online_op:stepsize2}
  \alpha_t=\frac{1}{t\mu},
  ~\gamma_t=\frac{\gamma_0}{\sqrt{\alpha_t}},~\forall t\in\mathbb{N}_+,
\end{align} where $\gamma_0\in(0,1/(\sqrt{2}G)]$ is a constant. Then,
\begin{align}
&\Reg(x_{[T]},\check{x}^*_{[T]})=\mathcal{O}(\log(T)),\label{online_op:corollaryregequ1_sc}\\
&\sum_{t=1}^T\|[g(x_{t})]_+\|=\mathcal{O}(\log(T)).\label{online_op:corollaryconsequ_sc}
\end{align}
\end{corollary}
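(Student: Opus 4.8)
The plan is to sharpen all three inequalities of Lemma~\ref{online_op:theoremreg_alg2} by exploiting strong convexity, and then to use the fact that the step sizes in \eqref{online_op:stepsize2} are \emph{anytime} (they do not depend on the horizon $T$), so any bound derived for a generic horizon may be invoked at each intermediate round. The single new ingredient is structural: revisiting the per-round inequality underlying Lemma~\ref{online_op:theoremreg_alg2}, I would replace the convexity step $f_t(x_t)-f_t(y_t)\le\langle\partial f_t(x_t),x_t-y_t\rangle$ by the strong-convexity inequality \eqref{online_op:assstrongconvexequ}, which contributes an extra $-\mu\|x_t-y_t\|^2$ on the right-hand side. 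Consequently the regret bound \eqref{online_op:theoremregequ_alg2} and the dual bound \eqref{online_op:theoremregequ_alg2_g} each acquire an additional summand $-\mu\sum_{t=1}^T\|x_t-y_t\|^2$.

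First I would treat the regret. Taking $y_t=\check{x}^*_T$ for all $t$ and noting $1/\alpha_t=t\mu$, a summation-by-parts computation gives
\begin{align*}
\sum_{t=1}^T\Delta_t(\check{x}^*_T)=\mu\sum_{t=1}^T\|\check{x}^*_T-x_t\|^2-\mu T\|\check{x}^*_T-x_{T+1}\|^2,
\end{align*}
so the positive sum is cancelled exactly by the strong-convexity term $-\mu\sum_{t=1}^T\|x_t-\check{x}^*_T\|^2$, leaving only the nonpositive remainder $-\mu T\|\check{x}^*_T-x_{T+1}\|^2\le0$. Hence the strengthened \eqref{online_op:theoremregequ_alg2} collapses to $\Reg(x_{[T]},\check{x}^*_{[T]})\le\sum_{t=1}^T G^2\alpha_t/2=(G^2/2\mu)\sum_{t=1}^T 1/t=\mathcal{O}(\log(T))$, which is \eqref{online_op:corollaryregequ1_sc}.

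Next I would bound the cumulative constraint violation through the dual variable. Applying the strengthened version of \eqref{online_op:theoremregequ_alg2_g} with a fixed feasible comparator $\tilde{x}\in\calX$ (so $[g(\tilde{x})]_+=\bm{0}_m$), the same cancellation removes $\sum_{t=1}^s\Delta_t(\tilde{x})-\mu\sum_{t=1}^s\|x_t-\tilde{x}\|^2$, and the remaining terms are estimated using $\gamma_t=\gamma_0\sqrt{t\mu}$: one checks $\sum_{t=1}^s G^2\alpha_t/2=\mathcal{O}(\log(s))$ and $\sum_{t=1}^s(\gamma_{t+1}-\gamma_t)^2F^2=\mathcal{O}(\log(s))$ since $(\gamma_{t+1}-\gamma_t)^2=\mathcal{O}(1/t)$, while the dominant term is $Fs$. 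This yields the anytime estimate $\|q_{s+1}\|^2=\mathcal{O}(s)$, i.e.\ $\|q_t\|=\mathcal{O}(\sqrt{t})$ for every $t$. Substituting this into \eqref{online_op:theoremconsequ_alg2} and using $1/\gamma_t=\mathcal{O}(1/\sqrt{t})$ together with $1/\gamma_t-1/\gamma_{t+1}=\mathcal{O}(t^{-3/2})$, the first term obeys $\|q_T\|/\gamma_T=\mathcal{O}(1)$ and the sum obeys $\sum_{t=1}^{T-1}(1/\gamma_t-1/\gamma_{t+1})\|q_t\|=\mathcal{O}(\sum_{t=1}^{T-1}t^{-1})=\mathcal{O}(\log(T))$, giving \eqref{online_op:corollaryconsequ_sc}.

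The main obstacle is the first, structural step: correctly inserting strong convexity into the proof of Lemma~\ref{online_op:theoremreg_alg2} while respecting the delayed-gradient nature of Algorithm~\ref{online_op:algorithm2}, where $x_t$ is produced from the subgradient $\partial f_{t-1}(x_{t-1})$ rather than $\partial f_t(x_t)$. One must verify that the extra $-\mu\|x_t-y_t\|^2$ attaches to the same round whose $\Delta_t$ term it is meant to cancel, so that the summation-by-parts telescoping is exact; the surplus inner-product and $\|x_{t+1}-x_t\|^2$ terms should then be absorbed exactly as in the convex proof. A secondary point requiring care is the anytime argument: because \eqref{online_op:stepsize2} is independent of $T$, the dual bound may legitimately be evaluated at every horizon $s\le T$ to obtain the uniform estimate $\|q_t\|=\mathcal{O}(\sqrt{t})$, which is precisely what makes the $t$-dependent weights $1/\gamma_t-1/\gamma_{t+1}$ in \eqref{online_op:theoremconsequ_alg2} summable to $\mathcal{O}(\log(T))$.
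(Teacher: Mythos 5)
Your proposal is correct and follows essentially the same route as the paper's own proof: you insert the strong-convexity surplus $-\mu\|x_t-y_t\|^2$ into the per-round inequality behind Lemma~\ref{online_op:theoremreg_alg2}, cancel it against $\Delta_t$ via $1/\alpha_t=t\mu$ exactly as in \eqref{online_op:dyz_sc}, and exploit the horizon-independent stepsizes to obtain the anytime bound $\|q_t\|=\mathcal{O}(\sqrt{t})$ (the paper's \eqref{online_op:corollaryconsequ_proof1_sc}, stated for all $T\in\mathbb{N}_+$) before summing $(1/\gamma_t-1/\gamma_{t+1})\|q_t\|=\mathcal{O}(1/t)$ to $\mathcal{O}(\log(T))$ in \eqref{online_op:theoremconsequ_alg2}. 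The only cosmetic difference is that you telescope against a fixed feasible comparator $\tilde{x}\in\calX$ while the paper takes $y_t=\check{x}^*_T$, which is immaterial because the comparator-dependent terms vanish after the cancellation, making the resulting constant horizon-independent in both treatments.
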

\begin{proof}
The explicit expressions of the right-hand sides of \eqref{online_op:corollaryregequ1_sc}--\eqref{online_op:corollaryconsequ_sc}, and the proof are given in Appendix~\ref{online_op:corollaryregproof_sc}.
\end{proof}

Corollary~\ref{online_op:corollaryreg_sc} is the first to provide the $\mathcal{O}(\log(T))$ regret and cumulative constraint violation for online convex optimization with long term constraints when the loss functions are strongly convex. While $\mathcal{O}(\log(T))$ regret is well known for traditional online convex optimization without long term constraints \cite{hazan2007logarithmic}, whether similar bounds exist for online convex optimization with long term constraints is an open problem. Thus, Corollary~\ref{online_op:corollaryreg_sc} is a significant result which requires non-trivial analysis.
Compared with the results that $\Reg(x_{[T]},\check{x}^*_{[T]})
=\mathcal{O}(T^{c})$  and $\|[\sum_{t=1}^Tg(x_{t})]_+\|=\mathcal{O}(T^{1-c/2})$ achieved in \citet{jenatton2016adaptive},
from \eqref{online_op:corollaryregequ1_sc} and \eqref{online_op:corollaryconsequ_sc}, we know that Algorithm~\ref{online_op:algorithm2} achieves strictly smaller static regret and constraint violation bounds under the stricter form of constraint violation metric. Similarly, compared with the results that $\Reg(x_{[T]},\check{x}^*_{[T]})
=\mathcal{O}(\log(T))$  and $\|\sum_{t=1}^T[g(x_{t})]_+\|=\mathcal{O}(\sqrt{\log(T)T})$ achieved in \citet{NIPS2018_7852},
from \eqref{online_op:corollaryregequ1_sc} and \eqref{online_op:corollaryconsequ_sc}, we know that Algorithm~\ref{online_op:algorithm2} achieves the same static regret bound but a strictly smaller cumulative constraint violation bound.

To end this section, let us analyze the bound of general regret for Algorithm~\ref{online_op:algorithm2}.
If the  set $\mathbb{X}$  has bounded diameter, then we can show that  Algorithm~\ref{online_op:algorithm2} achieves $\mathcal{O}(\sqrt{T}(1+P_T))$  regret. Moreover, the total number of rounds $T$ is not needed.
\begin{assumption}\label{online_op:ass_set_bounded}
The set $\mathbb{X}$  has bounded diameter, i.e., there is a positive constant $d(\mathbb{X})$ such that
\begin{align}
\|x-y\|\le d(\mathbb{X}),~\forall x,y\in\mathbb{X}.\label{online_op:domainupper}
\end{align}
\end{assumption}
Assumption~\ref{online_op:ass_set_bounded} is commonly used in the literature, e.g., \cite{zinkevich2003online,mahdavi2012trading,jenatton2016adaptive,NIPS2018_7852,zhang2018adaptive}.

\begin{theorem}\label{online_op:corollaryreg_dr}
Suppose Assumptions~\ref{online_op:assfunction}--\ref{online_op:ass_set_bounded} hold. Let $\{x_{t}\}$ be the sequence generated by Algorithm~\ref{online_op:algorithm2} with
\begin{align}\label{online_op:stepsize1_dr}
  \alpha_t=\frac{\alpha_0}{t^{c}},
  ~\gamma_t=\frac{\gamma_0}{\sqrt{\alpha_t}},~\forall t\in\mathbb{N}_+,
\end{align} where $\alpha_0>0$, $c\in(0,1)$, and $\gamma_0\in(0,1/(\sqrt{2}G)]$ are constants. Then,
\begin{align}
&\Reg(x_{[T]},y_{[T]})
=\mathcal{O}(T^{1-c}+T^c(1+P_T)),\label{online_op:corollaryregequ1_dr}\\
&\sum_{t=1}^T\|[g(x_{t})]_+\|=
\mathcal{O}(T^{(1-c)/2}).
\label{online_op:corollaryconsequ_dr}
\end{align}
\end{theorem}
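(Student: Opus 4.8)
The plan is to invoke Lemma~\ref{online_op:theoremreg_alg2}, whose three inequalities \eqref{online_op:theoremregequ_alg2}--\eqref{online_op:theoremregequ_alg2_g} already reduce both quantities of interest to sums involving the stepsizes $\{\alpha_t\}$, the dual weights $\{\gamma_t\}$, and the dynamic-regret quantity $\sum_{t=1}^T\Delta_t(y_t)$. The whole argument then amounts to estimating these sums under the choice \eqref{online_op:stepsize1_dr}, using Assumption~\ref{online_op:ass_set_bounded} to control all distances by $d(\mathbb{X})$. A preliminary observation I would record is that, because the stepsizes in \eqref{online_op:stepsize1_dr} depend only on the current index $t$ and not on the horizon, every inequality of Lemma~\ref{online_op:theoremreg_alg2} stays valid when $T$ is replaced by an arbitrary intermediate horizon $s$; I will use this causality fact in the constraint-violation part.

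For the regret bound I would start from \eqref{online_op:theoremregequ_alg2} and handle the two summands separately. The term $\sum_{t=1}^T G^2\alpha_t/2=(G^2\alpha_0/2)\sum_{t=1}^T t^{-c}$ is $\mathcal{O}(T^{1-c})$ by comparing the sum with $\int_1^T x^{-c}\,dx$. The term $\sum_{t=1}^T\Delta_t(y_t)$ is the genuinely dynamic contribution; since $\alpha_t$ is now time-varying and the comparator $y_t$ moves, I would rearrange it by summation by parts into a boundary term, a comparator-drift part $\sum_{t=1}^{T-1}\alpha_{t+1}^{-1}\big(\|y_{t+1}-x_{t+1}\|^2-\|y_t-x_{t+1}\|^2\big)$, and a stepsize-variation part $\sum_{t=1}^{T-1}\big(\alpha_{t+1}^{-1}-\alpha_t^{-1}\big)\|y_t-x_{t+1}\|^2$. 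Each consecutive-comparator difference is bounded by $2d(\mathbb{X})\|y_{t+1}-y_t\|$ via the identity $\|a\|^2-\|b\|^2=(\|a\|-\|b\|)(\|a\|+\|b\|)$, the triangle inequality, and Assumption~\ref{online_op:ass_set_bounded}; using $\alpha_{t+1}^{-1}\le\alpha_T^{-1}$ (as $\{\alpha_t\}$ is non-increasing) these sum to at most $2d(\mathbb{X})\alpha_T^{-1}P_T$. The stepsize-variation weights $\alpha_{t+1}^{-1}-\alpha_t^{-1}\ge0$ telescope, and bounding each squared distance by $d(\mathbb{X})^2$ leaves $d(\mathbb{X})^2/\alpha_T$. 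Altogether $\sum_{t=1}^T\Delta_t(y_t)\le \alpha_T^{-1}\big(d(\mathbb{X})^2+2d(\mathbb{X})P_T\big)=\mathcal{O}(T^c(1+P_T))$, and adding the two contributions gives \eqref{online_op:corollaryregequ1_dr}.

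For the cumulative constraint violation I would first obtain a per-horizon bound on the dual variable. Applying \eqref{online_op:theoremregequ_alg2_g} at horizon $s$ with the constant comparator $y_t\equiv\bar x$ for a fixed $\bar x\in\calX$ (so its path-length vanishes), the term $\sum_{t=1}^s\Delta_t(\bar x)$ telescopes to at most $d(\mathbb{X})^2/\alpha_s=\mathcal{O}(s^c)$, the stepsize sum is $\mathcal{O}(s^{1-c})$, the term $\tfrac12\gamma_1^2F^2$ is $\mathcal{O}(1)$, and $\sum_{t=1}^s(\gamma_{t+1}-\gamma_t)^2F^2=\mathcal{O}(1)$ since $\gamma_t=(\gamma_0/\sqrt{\alpha_0})\,t^{c/2}$ gives $(\gamma_{t+1}-\gamma_t)^2=\mathcal{O}(t^{c-2})$, a summable sequence because $c-2<-1$. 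The dominant term is $Fs$, so $\tfrac12\|q_{s+1}\|^2=\mathcal{O}(s)$, i.e.\ $\|q_s\|\le C\sqrt{s}$ for some constant $C$ and all $s$. I would then substitute this into \eqref{online_op:theoremconsequ_alg2}. The boundary term is $\gamma_T^{-1}\|q_T\|=\mathcal{O}(T^{-c/2}\cdot T^{1/2})=\mathcal{O}(T^{(1-c)/2})$. For the weighted sum the key point is to keep the sharp $\|q_t\|\le C\sqrt t$: since $\gamma_t^{-1}-\gamma_{t+1}^{-1}=\mathcal{O}(t^{-c/2-1})$ by the mean value theorem, each summand is $\mathcal{O}(t^{-(c+1)/2})$, and $\sum_{t=1}^{T-1}t^{-(c+1)/2}=\mathcal{O}(T^{(1-c)/2})$ because $(c+1)/2<1$. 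Hence \eqref{online_op:corollaryconsequ_dr} follows.

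The main obstacle is the dynamic-regret estimate of $\sum_{t=1}^T\Delta_t(y_t)$: unlike Theorem~\ref{online_op:corollaryreg}, the stepsize is time-varying and the comparator drifts, so naive telescoping fails and one must simultaneously control the stepsize-variation weights and the comparator drift $\|y_{t+1}-y_t\|$ through a careful summation by parts; Assumption~\ref{online_op:ass_set_bounded} is exactly what renders both the drift term and the telescoped weight term finite and of the right order. A secondary subtlety, easy to overlook, is that the constraint-violation estimate genuinely requires the per-index bound $\|q_t\|\le C\sqrt t$ rather than the horizon-level bound $\|q_T\|=\mathcal{O}(\sqrt T)$: only the former interacts with the decay of $\gamma_t^{-1}-\gamma_{t+1}^{-1}$ to yield the claimed $\mathcal{O}(T^{(1-c)/2})$ rate, whereas a uniform bound would give only $\mathcal{O}(\sqrt T)$.
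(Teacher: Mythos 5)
Your proposal is correct and takes essentially the same route as the paper's proof in Appendix~\ref{online_op:corollaryregproof_dr}: the same summation-by-parts decomposition of $\sum_{t=1}^T\Delta_t(y_t)$ into a boundary term, a stepsize-variation term, and a comparator-drift term controlled by Assumption~\ref{online_op:ass_set_bounded}, followed by the same two-step violation argument that first extracts $\|q_t\|=\mathcal{O}(\sqrt{t})$ from \eqref{online_op:theoremregequ_alg2_g} with a static comparator and then substitutes this into \eqref{online_op:theoremconsequ_alg2} via the $\mathcal{O}(t^{-c/2-1})$ decay of $\gamma_t^{-1}-\gamma_{t+1}^{-1}$. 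Your deviations are cosmetic (mean-value theorem in place of the paper's elementary inequalities, a sharper $\mathcal{O}(1)$ bound on $\sum_t(\gamma_{t+1}-\gamma_t)^2$ where the paper settles for $\mathcal{O}(T^c)$), and your explicit observation that horizon-independent stepsizes make the per-index bound $\|q_t\|\le C\sqrt{t}$ legitimate is a point the paper uses only implicitly.
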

\begin{proof}
The explicit expressions of the right-hand sides of \eqref{online_op:corollaryregequ1_dr}--\eqref{online_op:corollaryconsequ_dr}, and the proof are given in  Appendix~\ref{online_op:corollaryregproof_dr}.
\end{proof}

If the optimal static decision sequence is chosen as the comparator sequence, i.e., $y_{[T]}=\check{x}^*_{[T]}$, then $P_T=0$. In this case, the results in Theorem~\ref{online_op:corollaryreg_dr} recover the results in Theorem~\ref{online_op:corollaryreg}.
By setting $c=0.5$ in Theorem~\ref{online_op:corollaryreg_dr}, we have $\Reg(x_{[T]},y_{[T]})=\mathcal{O}(\sqrt{T}(1+P_T))$ and $\sum_{t=1}^T\|[g(x_{t})]_+\|=
\mathcal{O}(T^{1/4})$, which recover the regret bound achieved in \citet{zinkevich2003online}. If the path-length $P_T$ is known in advance, then the regret bound can be reduced under the cost that the cumulative constraint violation bound is increased.
\begin{corollary}\label{online_op:corollaryreg_dr2}
For any $T\in\mathbb{N}_+$, suppose the path-length $P_T$ is known in advance.
Under the same conditions stated in Theorem~\ref{online_op:corollaryreg_dr} with $\alpha_0=(1+P_T)^c$,
\begin{align}
&\Reg(x_{[T]},y_{[T]})
=\mathcal{O}(T^c(1+P_T)^{1-c}+T^{1-c}(1+P_T)^c),\label{online_op:corollaryregequ1_dr2}\\
&\sum_{t=1}^T\|[g(x_{t})]_+\|=
\mathcal{O}(T^{(1-c)/2}(1+P_T)^{c/2}).
\label{online_op:corollaryconsequ_dr2}
\end{align}
\end{corollary}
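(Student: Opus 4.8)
The plan is to re-run the proof of Theorem~\ref{online_op:corollaryreg_dr} while treating $\alpha_0$ as a free parameter rather than an absolute constant, track its influence on every aggregate quantity explicitly, and only substitute $\alpha_0=(1+P_T)^c$ at the very end. With $\alpha_t=\alpha_0/t^c$ and $\gamma_t=\gamma_0/\sqrt{\alpha_t}=(\gamma_0/\sqrt{\alpha_0})t^{c/2}$, I would first record the scalings entering Lemma~\ref{online_op:theoremreg_alg2}. Since $\sum_{t=1}^T\alpha_t=\alpha_0\sum_{t=1}^T t^{-c}=\mathcal{O}(\alpha_0 T^{1-c})$, the curvature term $\sum_t G^2\alpha_t/2$ in \eqref{online_op:theoremregequ_alg2} contributes $\mathcal{O}(\alpha_0 T^{1-c})$. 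For $\sum_t\Delta_t(y_t)$ I would reuse the Abel/telescoping argument from the proof of Theorem~\ref{online_op:corollaryreg_dr}, invoking the bounded-diameter Assumption~\ref{online_op:ass_set_bounded} to control $\|y_{t+1}-x_{t+1}\|^2-\|y_t-x_{t+1}\|^2$ by a multiple of $\|y_{t+1}-y_t\|$; because $1/\alpha_t=t^c/\alpha_0$, this yields $\sum_t\Delta_t(y_t)=\mathcal{O}(\alpha_0^{-1}T^c(1+P_T))$. Combining the two pieces gives the $\alpha_0$-explicit bound
\begin{align*}
\Reg(x_{[T]},y_{[T]})=\mathcal{O}\!\left(\frac{T^c(1+P_T)}{\alpha_0}+\alpha_0 T^{1-c}\right).
\end{align*}

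For the cumulative constraint violation I would start from \eqref{online_op:theoremconsequ_alg2} and \eqref{online_op:theoremregequ_alg2_g}. The right-hand side of \eqref{online_op:theoremregequ_alg2_g} is dominated by its $FT$ term, while the remaining pieces are controlled: $\gamma_1^2F^2=\mathcal{O}(\alpha_0^{-1})$, the term $\sum_t(\gamma_{t+1}-\gamma_t)^2F^2=(\gamma_0^2/\alpha_0)F^2\sum_t((t+1)^{c/2}-t^{c/2})^2=\mathcal{O}(\alpha_0^{-1})$ because the series converges for $c<1$, and the two regret-type sums match the regret scaling above. Using $1+P_T=\mathcal{O}(T)$ (again from Assumption~\ref{online_op:ass_set_bounded}) together with $\alpha_0=(1+P_T)^c=\mathcal{O}(T^c)\ge 1$, all of these stay $\mathcal{O}(T)$, so $\|q_{T+1}\|^2=\mathcal{O}(T)$; since the bound in \eqref{online_op:theoremregequ_alg2_g} is monotone in the horizon, this gives the uniform estimate $\max_t\|q_t\|=\mathcal{O}(\sqrt{T})$. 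Feeding this into \eqref{online_op:theoremconsequ_alg2}, both the leading piece $\gamma_T^{-1}\|q_T\|=(\sqrt{\alpha_0}/\gamma_0)T^{-c/2}\cdot\mathcal{O}(\sqrt{T})$ and the weighted sum $\sum_t(\gamma_t^{-1}-\gamma_{t+1}^{-1})\|q_t\|$ scale as $\mathcal{O}((\sqrt{\alpha_0}/\gamma_0)T^{(1-c)/2})$, yielding $\sum_{t=1}^T\|[g(x_t)]_+\|=\mathcal{O}(\sqrt{\alpha_0}\,T^{(1-c)/2})$.

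With these two $\alpha_0$-explicit bounds in hand, the corollary follows by substitution: setting $\alpha_0=(1+P_T)^c$ turns $\alpha_0^{-1}T^c(1+P_T)$ into $T^c(1+P_T)^{1-c}$ and $\alpha_0 T^{1-c}$ into $T^{1-c}(1+P_T)^c$, giving \eqref{online_op:corollaryregequ1_dr2}, while $\sqrt{\alpha_0}\,T^{(1-c)/2}=(1+P_T)^{c/2}T^{(1-c)/2}$ gives \eqref{online_op:corollaryconsequ_dr2}. I expect the main obstacle to be bookkeeping rather than a new idea: I must ensure that every $\mathcal{O}(\cdot)$ constant inherited from Theorem~\ref{online_op:corollaryreg_dr} is genuinely independent of both $\alpha_0$ and $P_T$, since here $\alpha_0$ is permitted to grow with $T$ through $P_T$. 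In particular, replacing $\|q_t\|$ by its uniform $\mathcal{O}(\sqrt{T})$ bound relies on $\alpha_0\ge 1$ (so the $\mathcal{O}(\alpha_0^{-1})$ remainders are harmless) and on $1+P_T=\mathcal{O}(T)$ from the bounded diameter; I would verify both explicitly before performing the substitution.
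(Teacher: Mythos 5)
Your proposal is correct and follows essentially the same route as the paper: the proof of Theorem~\ref{online_op:corollaryreg_dr} already keeps $\alpha_0$ explicit in \eqref{online_op:corollaryregequ1_proof_dr} and in $\varepsilon_3$ of \eqref{online_op:corollaryconsequ_proof1_dr}--\eqref{online_op:corollaryconsequ_proof2_dr}, and the paper concludes exactly as you do, substituting $\alpha_0=(1+P_T)^c$ together with $\alpha_0\le(1+d(\mathbb{X}))^cT^c$ from Assumption~\ref{online_op:ass_set_bounded}, which is precisely your ``$1+P_T=\mathcal{O}(T)$, $\alpha_0=\mathcal{O}(T^c)\ge 1$'' bookkeeping for keeping $\|q_T\|^2=\mathcal{O}(T)$. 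Your only deviations (the sharper convergent-series bound on $\sum_{t}(\gamma_{t+1}-\gamma_t)^2$ instead of the paper's telescoping estimate \eqref{online_op:corollaryregequ12_dr}, and retaining a general comparator in \eqref{online_op:theoremregequ_alg2_g} where the paper sets $y_t=\check{x}^*_T$) are cosmetic and do not change the argument.
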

\begin{proof}
The proof follows that of Theorem~\ref{online_op:corollaryreg_dr} given in  Appendix~\ref{online_op:corollaryregproof_dr} and the fact that $\alpha_0=(1+P_T)^c\le(1+d(\mathbb{X}))^cT^c$ under Assumption~\ref{online_op:ass_set_bounded}.
The explicit expressions of the right-hand sides of \eqref{online_op:corollaryregequ1_dr2}--\eqref{online_op:corollaryconsequ_dr2} are also given in  Appendix~\ref{online_op:corollaryregproof_dr}.
\end{proof}

By setting $c=0.5$ in Corollary~\ref{online_op:corollaryreg_dr2}, we have $\Reg(x_{[T]},y_{[T]})=\mathcal{O}(\sqrt{T(1+P_T)})$ and $\sum_{t=1}^T\|[g(x_{t})]_+\|=
\mathcal{O}(T^{1/4}(1+P_T)^{1/4})$, which recover the optimal regret bound achieved in \citet{zhang2018adaptive}.
However, in Corollary~\ref{online_op:corollaryreg_dr2} the path-length $P_T$ needs to be known in advance, which is normally unknown in practice. Thus, Algorithm~\ref{online_op:algorithm2} cannot achieve the optimal $\mathcal{O}(\sqrt{T(1+P_T)})$  regret bound in general. This motivates us to propose another algorithm such that it can achieve the optimal regret bound without using $P_T$, which is presented in the next section.

\subsection{An Improved Approach}
In this section, we propose another algorithm for the constrained online convex optimization problem formulated in Section \ref{online_opsec:problem}, which achieves $\mathcal{O}(\sqrt{T(1+P_T)})$  regret and $\mathcal{O}(\sqrt{T})$ cumulative constraint violation without using the path-length $P_T$ to design the algorithm parameters.

The proposed algorithm is summarized in Algorithm~\ref{online_op:algorithm}, which is inspired by the expert-tracking algorithm (Improved Ader) proposed in \citet{zhang2018adaptive} as well as Algorithm~\ref{online_op:algorithm2}. The basic idea of Algorithm~\ref{online_op:algorithm} is to run Algorithm~\ref{online_op:algorithm2} multiple times in parallel, each with a different stepsize that is optimal for a specific path-length, and then to combine them with an expert-tracking algorithm. By setting $N=1$ in Algorithm~\ref{online_op:algorithm}, we know that Algorithm~\ref{online_op:algorithm} becomes Algorithm~\ref{online_op:algorithm2}.
Algorithm~\ref{online_op:algorithm} is designed for online convex optimization with long term constraints, while the Improved Ader algorithm in \citet{zhang2018adaptive} is for traditional online convex optimization without long term constraints.
In other words, the main difference between Algorithm~\ref{online_op:algorithm} and the Improved Ader algorithm is that we avoid computing the projection $\calP_{\mathcal{X}}(\cdot)$ by considering long term constraints. Just as the algorithms in \citet{mahdavi2012trading,jenatton2016adaptive,NIPS2018_7852,yu2020lowJMLR} are the nontrivial extensions of the classic online gradient descent algorithm in \citet{zinkevich2003online}, Algorithm~\ref{online_op:algorithm} is a nontrivial extension of the Improved Ader algorithm.

\begin{algorithm}
\caption{}
\begin{algorithmic}\label{online_op:algorithm}
\STATE \textbf{Input}: parameters $N\in\mathbb{N}_+$ and $\beta>0$;  non-increasing sequence $\{\alpha_{i,t}\}\subseteq(0,+\infty)$ and non-decreasing sequence $\{\gamma_{i,t}\}\subseteq(0,+\infty),~\forall i\in[N]$.
\STATE \textbf{Initialize}:  $q_{i,0}={\bm 0}_{m}$, $x_{i,1}\in\mathbb{X}$, $w_{i,1}=\frac{N+1}{i(i+1)N},~\forall i\in[N]$, and $x_1=\sum_{i=1}^{N}w_{i,1}x_{i,1}$.
\FOR{$t=2,\dots$}
\STATE  Observe $\partial f_{t-1}(x_{t-1})$.
\STATE  Update \begin{align}
q_{i,t-1}&=q_{i,t-2}+\gamma_{i,t-1}[g(x_{i,t-1})]_+,\label{online_op:al_q}\\
\hat{q}_{i,t-1}&=q_{i,t-1}+\gamma_{i,t-1}[g(x_{i,t-1})]_+,\label{online_op:al_qhat}\\
x_{i,t}&=\argmin_{x\in\mathbb{X}}\{\alpha_{i,t-1}\langle\partial f_{t-1}(x_{t-1}), x\rangle\nonumber\\
&\quad+\alpha_{i,t-1}\gamma_{i,t}\langle\hat{q}_{i,t-1}, [g(x)]_+\rangle\nonumber\\
&\quad+\|x-x_{i,t-1}\|^2\},\label{online_op:al_xi}\\
l_{i,t-1}&=\langle\partial f_{t-1}(x_{t-1}),x_{i,t-1}-x_{t-1}\rangle,\label{online_op:al_l}\\
w_{i,t}&=\frac{w_{i,t-1}e^{-\beta l_{i,t-1}}}{\sum_{i=1}^{N}w_{i,t-1}e^{-\beta l_{i,t-1}}},\label{online_op:al_w}\\
x_{t}&=\sum_{i=1}^{N}w_{i,t}x_{i,t}.\label{online_op:al_x}
\end{align}
\ENDFOR
\STATE  \textbf{Output}: $\{x_{t}\}$.
\end{algorithmic}
\end{algorithm}

In Algorithm~\ref{online_op:algorithm}, for each $i\in[N]$, the equations \eqref{online_op:al_q}--\eqref{online_op:al_l} can be updated in parallel. Define the surrogate loss \cite{NIPS2016_14cfdb59,zhang2018adaptive} as
\begin{align}\label{online_op:ell}
\ell_t(x)=\langle \partial f_t(x_t),x-x_t\rangle,~\forall x\in\mathbb{X}.
\end{align}
It is straightforward to see that $\partial\ell_t(x_{i,t})=\partial f_t(x_t)$. Thus, for each $i\in[N]$, the updating equations \eqref{online_op:al_q}--\eqref{online_op:al_xi} are exactly \eqref{online_op:al_q2}--\eqref{online_op:al_x2} for solving constrained online convex optimization with loss functions $\{\ell_t(x)\}$.

The regret and cumulative constraint violation bounds for Algorithm~\ref{online_op:algorithm} are provided in the following theorem.
\begin{theorem}\label{online_op:corollaryreg_alg1}
Suppose Assumptions~\ref{online_op:assfunction}--\ref{online_op:ass_set_bounded} hold. For any $T\in\mathbb{N}_+$, let $x_{[T]}$ be the sequence generated by Algorithm~\ref{online_op:algorithm} with
\begin{align}\label{online_op:stepsize3}
&N=\lceil \kappa\log_2(1+T)\rceil+1,~\beta=\frac{\beta_0}{T^c},\nonumber\\
&\alpha_{i,t}=\frac{\alpha_02^{i-1}}{t^c},
  ~\gamma_{i,t}=\frac{\gamma_0}{\sqrt{\alpha_{i,t}}},~\forall t\in[T],
\end{align} where $\kappa\in[0,c]$, $c\in(0,1)$, $\alpha_0>0$, $\beta_0>0$, and $\gamma_0\in(0,1/(\sqrt{2}G)]$ are constants. Then,
\begin{align}
&\Reg(x_{[T]},y_{[T]})
=\mathcal{O}(T^c(1+P_T)^{1-\kappa}+T^{1-c}(1+P_T)^\kappa),\label{online_op:corollaryregequ1_alg1}\\
&\sum_{t=1}^T\|[g(x_{t})]_+\|=
\mathcal{O}(T^{(1-c+\kappa)/2}).
\label{online_op:corollaryconsequ_alg1}
\end{align}
\end{theorem}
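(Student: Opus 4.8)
The plan is to follow the expert-tracking (Ader) framework, treating Algorithm~\ref{online_op:algorithm2} as the base learner run by each expert $i\in[N]$ and the exponential-weights update \eqref{online_op:al_w} as the meta-learner. First I would pass from the true losses to the surrogate losses $\ell_t$ of \eqref{online_op:ell}. By the subgradient inequality, $f_t(x_t)-f_t(y_t)\le\langle\partial f_t(x_t),x_t-y_t\rangle=\ell_t(x_t)-\ell_t(y_t)$, so $\Reg(x_{[T]},y_{[T]})\le\sum_{t=1}^T(\ell_t(x_t)-\ell_t(y_t))$. Since $\ell_t$ is affine, $x_t=\sum_iw_{i,t}x_{i,t}$, and $\ell_t(x_t)=0$, we also have $\ell_t(x_t)=\sum_iw_{i,t}\ell_t(x_{i,t})$ and $l_{i,t}=\ell_t(x_{i,t})$. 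This lets me split, for any fixed $i$, $\sum_t(\ell_t(x_t)-\ell_t(y_t))=\underbrace{\sum_t(\ell_t(x_t)-\ell_t(x_{i,t}))}_{\text{meta-regret}}+\underbrace{\sum_t(\ell_t(x_{i,t})-\ell_t(y_t))}_{\text{regret of expert }i}$.

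Next I would bound the two pieces separately. For the meta-regret I would invoke the standard exponential-weights guarantee with the non-uniform prior $w_{i,1}=\tfrac{N+1}{i(i+1)N}$: since $|l_{i,t}|\le Gd(\mathbb{X})$ by Assumptions~\ref{online_op:ass_subgradient}--\ref{online_op:ass_set_bounded} and $\ln(1/w_{i,1})=\mathcal{O}(\ln i)$, the meta-regret against expert $i$ is $\mathcal{O}(\beta^{-1}\ln(1/w_{i,1})+\beta\,T(Gd(\mathbb{X}))^2)=\mathcal{O}(T^c\ln i+T^{1-c})$ with $\beta=\beta_0/T^c$. For the regret of expert $i$, note from the remark after \eqref{online_op:ell} that each expert runs Algorithm~\ref{online_op:algorithm2} on $\{\ell_t\}$ with base stepsize $\eta_i:=\alpha_0 2^{i-1}$, so I would apply Lemma~\ref{online_op:theoremreg_alg2}: estimating the telescoping term by summation by parts gives $\sum_t\Delta_t(y_t)\le\alpha_{i,T}^{-1}(d(\mathbb{X})^2+2d(\mathbb{X})P_T)=\mathcal{O}(T^c(1+P_T)/\eta_i)$, while $\sum_t\tfrac{G^2\alpha_{i,t}}2=\mathcal{O}(\eta_i T^{1-c})$, so expert $i$'s surrogate regret is $\mathcal{O}(T^c(1+P_T)/\eta_i+\eta_i T^{1-c})$. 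I would then select the index $i^\star$ whose stepsize satisfies $\eta_{i^\star}\asymp(1+P_T)^\kappa$; such a grid point exists because $N=\lceil\kappa\log_2(1+T)\rceil+1$ forces $\eta_N\asymp(1+T)^\kappa$ while $1+P_T\le1+(T-1)d(\mathbb{X})=\mathcal{O}(T)$ under Assumption~\ref{online_op:ass_set_bounded}. Substituting $\eta_{i^\star}$ turns the two expert terms into $T^c(1+P_T)^{1-\kappa}$ and $T^{1-c}(1+P_T)^\kappa$; the meta term $T^c\ln i^\star=\mathcal{O}(T^c\log\log(1+P_T))$ is absorbed, giving \eqref{online_op:corollaryregequ1_alg1}.

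For the cumulative constraint violation I would use that each $[g_j(\cdot)]_+$ is convex (the nondecreasing convex $[\cdot]_+$ composed with the convex $g_j$), so Jensen gives $[g(x_t)]_+\le\sum_iw_{i,t}[g(x_{i,t})]_+$ componentwise, whence $\|[g(x_t)]_+\|\le\sum_iw_{i,t}\|[g(x_{i,t})]_+\|$. Summing in $t$ and using $w_{i,t}\le1$ decouples the experts: $\sum_t\|[g(x_t)]_+\|\le\sum_{i=1}^N\sum_t\|[g(x_{i,t})]_+\|$. For each expert I would control its individual violation through \eqref{online_op:theoremconsequ_alg2}--\eqref{online_op:theoremregequ_alg2_g}: with a fixed feasible comparator the $FT$ term dominates \eqref{online_op:theoremregequ_alg2_g}, so $\|q_{i,t}\|=\mathcal{O}(\sqrt t)$, and with $\gamma_{i,t}=\gamma_0 t^{c/2}/\sqrt{\eta_i}$ both the leading term $\gamma_{i,T}^{-1}\|q_{i,T}\|$ and the telescoped sum in \eqref{online_op:theoremconsequ_alg2} are $\mathcal{O}(\sqrt{\eta_i}\,T^{(1-c)/2})$. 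Summing the geometric series $\sum_i\sqrt{\eta_i}=\sqrt{\alpha_0}\sum_i2^{(i-1)/2}=\mathcal{O}(2^{N/2})=\mathcal{O}(T^{\kappa/2})$ then yields $\sum_t\|[g(x_t)]_+\|=\mathcal{O}(T^{(1-c)/2}\cdot T^{\kappa/2})=\mathcal{O}(T^{(1-c+\kappa)/2})$, which is \eqref{online_op:corollaryconsequ_alg1}.

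The routine ingredients — the Hedge bound and the summation-by-parts estimate of $\sum_t\Delta_t(y_t)$ — are standard. I expect the main obstacle to be the constraint-violation half, which has no counterpart in the projection-based Ader analysis: one must verify that clipped-constraint convexity transfers the violation of the aggregated iterate $x_t$ to the per-expert iterates, and then confirm that the largest-stepsize expert ($i=N$, with $\eta_N\asymp T^\kappa$) dominates the geometric sum so that pooling costs only the extra factor $T^{\kappa/2}$ over a single run. A secondary subtlety is keeping the base-stepsize dependence $\eta_i$ explicit throughout Lemma~\ref{online_op:theoremreg_alg2} (rather than folding it into the $\mathcal{O}(\cdot)$ of Theorem~\ref{online_op:corollaryreg_dr}), since it is exactly this $\eta_i$-dependence that the meta-learner trades off against the unknown path-length.
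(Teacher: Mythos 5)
Your proposal is correct and follows essentially the same route as the paper's own proof: the same surrogate-loss reduction $f_t(x_t)-f_t(y_t)\le\ell_t(x_t)-\ell_t(y_t)$ with the meta/expert split handled by Lemma~\ref{online_op:lemma_expert}, the same per-expert application of the Theorem~\ref{online_op:corollaryreg_dr} analysis with the stepsize dependence $\alpha_02^{i-1}$ kept explicit, and the same choice of expert index with $2^{i_0-1}\asymp(1+P_T/d(\mathbb{X}))^\kappa$. The constraint-violation half also matches the paper exactly (its \eqref{online_op:corollaryregequ1_alg1_proof5} is your Jensen decoupling, and its uniform bound $\varepsilon_{i,4}\le\varepsilon_5$ is precisely where your remark that the $FT$ term dominates relies on $\kappa\le c$), followed by the identical geometric sum $\sum_{i}\sqrt{2^{i-1}}=\mathcal{O}(2^{N/2})=\mathcal{O}(T^{\kappa/2})$.
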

\begin{proof}
The explicit expressions of the right-hand sides of \eqref{online_op:corollaryregequ1_alg1}--\eqref{online_op:corollaryconsequ_alg1}, and the proof are given in  Appendix~\ref{online_op:corollaryregproof_alg1}.
\end{proof}

The bounds presented in \eqref{online_op:corollaryregequ1_alg1}--\eqref{online_op:corollaryconsequ_alg1} still hold if choosing $\alpha_{i,t}=\alpha_02^{i-1}/T^c$ in \eqref{online_op:stepsize3}.
By setting $\kappa=c=0.5$ in Theorem~\ref{online_op:corollaryreg_alg1}, we have $\Reg(x_{[T]},y_{[T]})=\mathcal{O}(\sqrt{T(1+P_T)})$ and $\sum_{t=1}^T\|[g(x_{t})]_+\|=
\mathcal{O}(\sqrt{T})$, which recover the optimal regret bound achieved in \citet{zhang2018adaptive} where long term constraints are not considered. Therefore, Theorem~\ref{online_op:corollaryreg_alg1} is the first to achieve the optimal regret bound for online convex optimization with long term constraints without knowing the path-length $P_T$ while maintaining sublinear cumulative constraint violation. Due to the long term constraints, the analysis is much more complicated. On the other hand, by comparing Corollary~\ref{online_op:corollaryreg_dr2} and Theorem~\ref{online_op:corollaryreg_alg1}, we see that Algorithm~\ref{online_op:algorithm} can achieve the optimal regret bound without knowing $P_T$ under the cost that the (theoretical) cumulative constraint violation bound is increased.

\section{Simulations}\label{online_opsec:simulation}
In this section, we illustrate and verify the proposed algorithms through numerical simulations.

\subsection{Online Linear Programming}
Similar to \citet{yu2020lowJMLR}, we consider online convex optimization with linear loss functions $f_t(x)=\langle\theta_t,x\rangle$, where $\theta_t\in\mathbb{R}^p$ is time-varying and unknown at round $t$; constraint set $\mathbb{X}\subseteq\mathbb{R}^p$; and constraint functions $Ax\le b$, where $A\in\mathbb{R}^{m\times p}$ and $b\in\mathbb{R}^m$ are fixed and known in advance. This problem has broad applications in various areas including transportation, energy, telecommunications, and manufacturing.

In the simulations, similar to \citet{yu2020lowJMLR}, we choose $p=2$, $m=3$, and $\mathbb{X}=[-1,1]^2$. Components of $A$ and $b$ are uniformly distributed random numbers in the intervals $[0,2]$ and $[0,5]$, respectively. The total number of rounds is chosen as $T=5000$. The time-varying loss coefficients are set as $\theta_t=\theta_{1,t}+\theta_{2,t}+\theta_{3,t}$, where components of $\theta_{1,t}$ are uniformly distributed random numbers in the interval $[-t^{1/10},t^{1/10}]$; components of $\theta_{2,t}$ are uniformly distributed random numbers in the interval $[-1,0]$ when $t\in[1,1500]\cup[2000,3500]\cup[4000,5000]$ and in the interval $[0,1]$ otherwise; and $\theta_{3,t}=((-1)^{\mu(t)},(-1)^{\mu(t)})^\top$, where $\mu(t)$ is a random permutation of the integers from 1 to $T$.

We compare Algorithms~\ref{online_op:algorithm2} and \ref{online_op:algorithm} with state-of-the-art algorithms: Algorithm~1 in \citet{yu2020lowJMLR},  Algorithm~1 in \citet{NIPS2018_7852}, the algorithm in \citet{jenatton2016adaptive}, and Algorithm~1 in \citet{mahdavi2012trading}. Table~\ref{tab:para} lists all the algorithm parameters used in the simulations\footnote{ If we write Algorithm~\ref{online_op:algorithm2} and the algorithm in \citet{yu2020lowJMLR} in the same form, then we can see that they use the same algorithm parameters.}. Figures~\ref{online_op:fig:loss} and \ref{online_op:fig:con} illustrate the evolutions of the cumulative loss $\sum_{t=1}^{T}f_t(x_t)$ and the cumulative constraint violation $\sum_{t=1}^T\|[g(x_{t})]_+\|$ averaged over 1000 independent experiments generated from the above settings, respectively. Figure~\ref{online_op:fig:loss} shows that Algorithm~\ref{online_op:algorithm} has the smallest cumulative loss and  Algorithm~\ref{online_op:algorithm2} has smaller cumulative loss than other algorithms. Figure~\ref{online_op:fig:con} shows that  Algorithms~\ref{online_op:algorithm2} and \ref{online_op:algorithm} have almost the same  cumulative constraint violation which is slightly smaller than that achieved by the algorithm in \citet{yu2020lowJMLR}, and smaller than that achieved by the algorithm in \citet{NIPS2018_7852}, and much smaller than that achieved by the algorithms in \citet{jenatton2016adaptive,mahdavi2012trading}. Therefore, the simulation results are in accordance with the theoretical results summarized in Table~\ref{online_op::table}.

\begin{table*}[ht!]
\caption{Parameters for each algorithm in online linear programming.}
\label{tab:para}
\begin{center}
\begin{small}
\begin{tabular}{M{3.5cm}|M{8.0cm}N}
\hline
Algorithm  &Parameters&\\[7pt]

\hline
Algorithm~\ref{online_op:algorithm2}        & $\alpha_t=2/T^c$, $\gamma_t = T^{c/2}$, $c=0.5$ &\\[7pt]

\hline
Algorithm~\ref{online_op:algorithm}         & $\alpha_{i,t}=2/T^c$, $\beta=3/T^c$, $\gamma_{i,t} = T^{c/2}$, $c=\kappa=0.5$ & \\[7pt]

\hline
\citet{yu2020lowJMLR}       & $\alpha=\sqrt{T}/4$, $\gamma=T^{1/4}$ &\\[7pt]

\hline
\citet{NIPS2018_7852}         & $\sigma=0.25$, $\eta=1.5/T^c$, $c=0.5$ &\\[7pt]

\hline
\citet{jenatton2016adaptive}         & $\theta_t=0.01/t^c$, $\eta_t=0.7/t^c$, $\mu_t=1/(t+1)/\theta_t$, $c=0.5$ &\\[7pt]

\hline
\citet{mahdavi2012trading}      & $\delta=0.5$, $\eta=0.8/\sqrt{T}$ &\\[7pt]

\hline
\end{tabular}
\end{small}
\end{center}
\vskip -0.1in
\end{table*}

\begin{figure}[!ht]
\centering
  \includegraphics[width=0.5\textwidth]{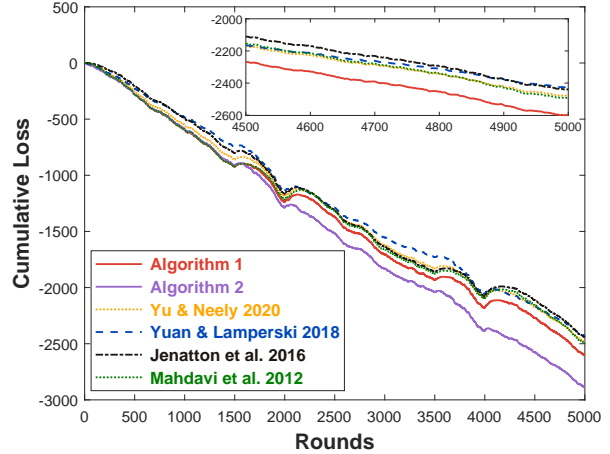}
  \caption{Evolutions of the cumulative loss in online linear programming.}
  \label{online_op:fig:loss}
\end{figure}

\begin{figure}[!ht]
\centering
  \includegraphics[width=0.5\textwidth]{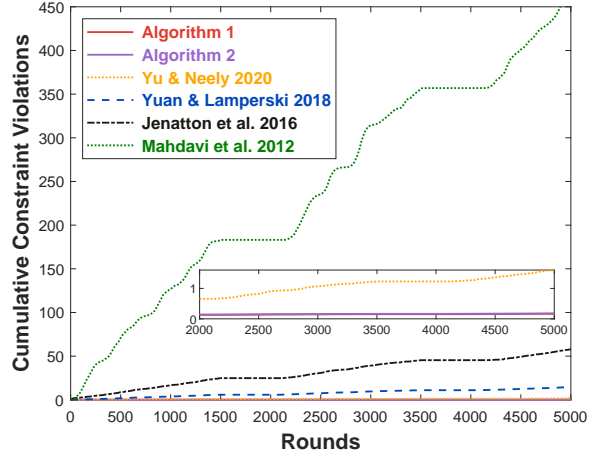}
  \caption{Evolutions of the cumulative constraint violation in online linear programming.}
  \label{online_op:fig:con}
\end{figure}

\subsection{Online Quadratic Programming}
In order to verify the improved theoretical results for strongly convex functions, we replace the linear loss functions in the above simulations by quadratic loss functions. Specifically, we consider $f_t(x)=\|x-\theta_t\|^2+20\langle\theta_t,x\rangle$.

We compare Algorithm~\ref{online_op:algorithm2} with the algorithms in \citet{NIPS2018_7852,jenatton2016adaptive} since the performance of these algorithms under the strong convexity assumption has been analyzed. Table~\ref{tab:paraSC} lists all the algorithm parameters used in the simulations. Figures~\ref{online_op:fig:lossSC} and \ref{online_op:fig:conSC} illustrate the evolutions of the cumulative loss $\sum_{t=1}^{T}f_t(x_t)$ and the cumulative constraint violation $\sum_{t=1}^T\|[g(x_{t})]_+\|$ averaged over 1000 independent experiments, respectively. Figure~\ref{online_op:fig:lossSC} shows that Algorithm~\ref{online_op:algorithm2} and the algorithm in \citet{NIPS2018_7852} have almost the same cumulative loss, which is smaller than that achieved by the algorithm in \citet{jenatton2016adaptive}. Figure~\ref{online_op:fig:conSC} shows that Algorithm~\ref{online_op:algorithm2} has significant smaller cumulative constraint violation than that achieved by the algorithms in \citet{NIPS2018_7852,jenatton2016adaptive}. Therefore, the simulation results match the theoretical results summarized in Table~\ref{online_op::table}.

\begin{table*}[ht!]
\caption{Parameters for each algorithm in online quadratic programming.}
\label{tab:paraSC}
\begin{center}
\begin{small}
\begin{tabular}{M{3.5cm}|M{8.0cm}N}
\hline
Algorithm  &Parameters&\\[7pt]

\hline
Algorithm~\ref{online_op:algorithm2}        & $\alpha_t=6/t$, $\gamma_t = \sqrt{t}$ &\\[7pt]

\hline
\citet{NIPS2018_7852}         & $\theta_t=4.5/(t+1)$, $\eta_t=3/(t+1)$ &\\[7pt]

\hline
\citet{jenatton2016adaptive}         & $\theta_t=c/t^{0.01}$, $\eta_t=2.5/t$, $\mu_t=1/(t+1)/\theta_t$, $c=0.001$ &\\[7pt]

\hline
\end{tabular}
\end{small}
\end{center}
\vskip -0.1in
\end{table*}

\begin{figure}[!ht]
\centering
  \includegraphics[width=0.5\textwidth]{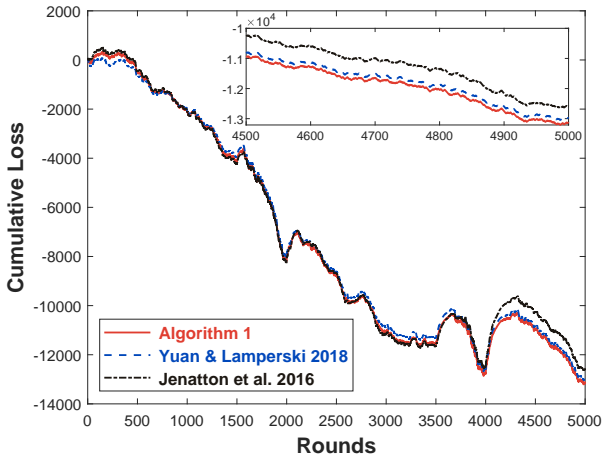}
  \caption{Evolutions of the cumulative loss in online quadratic programming.}
  \label{online_op:fig:lossSC}
\end{figure}

\begin{figure}[!ht]
\centering
  \includegraphics[width=0.5\textwidth]{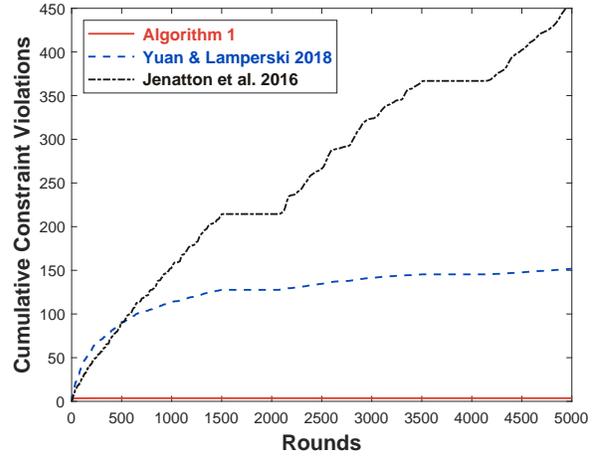}
  \caption{Evolutions of the cumulative constraint violation in online quadratic programming.}
  \label{online_op:fig:conSC}
\end{figure}

\section{Conclusions}\label{online_opsec:conclusion}
In this paper, we proposed two algorithms for online convex optimization with long term constraints. We analyzed static regret and cumulative constraint violation bounds for the first algorithm when the loss functions are convex and strongly convex, respectively. We achieved improved performance compared with existing results in the sense that the cumulative constraint violation is a stricter  metric than the commonly used constraint violation metric in the literature and smaller cumulative constraint violation bounds are achieved. We also analyzed regret with respect to any comparator sequence for both algorithms and the optimal $\mathcal{O}(\sqrt{T(1+P_T)})$ regret can be achieved by the second algorithm. In the future, we will investigate how to use the curvature of loss functions to reduce the regret and cumulative constraint violation bounds.

\section*{Acknowledgements}
The authors thank anonymous reviewers for the valuable comments and suggestions, and also thank the meta-reviewer for handling the review of this work.

This work was supported by Knut and Alice Wallenberg Foundation;  Swedish Foundation for Strategic Research; Swedish Research Council; Ministry of Education of Republic of Singapore under Grant AcRF TIER 1- 2019-T1-001-088 (RG72/19); National Natural Science Foundation of China under Grants 61991403, 61991404, 61991400, and 62003243; 2020 Science
and Technology Major Project of Liaoning Province under Grant 2020JH1/10100008; Shanghai Municipal Commission of Science and Technology under No. 19511132101; and Shanghai Municipal
Science and Technology Major Project under No. 2021SHZDZX0100.


\bibliography{refs}

\begin{thebibliography}{35}
\providecommand{\natexlab}[1]{#1}
\providecommand{\url}[1]{\texttt{#1}}
\expandafter\ifx\csname urlstyle\endcsname\relax
  \providecommand{\doi}[1]{doi: #1}\else
  \providecommand{\doi}{doi: \begingroup \urlstyle{rm}\Url}\fi

\bibitem[Agarwal et~al.(2010)Agarwal, Dekel, and Xiao]{agarwal2010optimal}
Agarwal, A., Dekel, O., and Xiao, L.
\newblock Optimal algorithms for online convex optimization with multi-point
  bandit feedback.
\newblock In \emph{Conference on Learning Theory}, pp.\  28--40, 2010.

\bibitem[Cesa-Bianchi et~al.(1996)Cesa-Bianchi, Long, and
  Warmuth]{cesa1996worst}
Cesa-Bianchi, N., Long, P.~M., and Warmuth, M.~K.
\newblock Worst-case quadratic loss bounds for prediction using linear
  functions and gradient descent.
\newblock \emph{IEEE Transactions on Neural Networks}, 7\penalty0 (3):\penalty0
  604--619, 1996.

\bibitem[Chen et~al.(2017)Chen, Ling, and Giannakis]{chen2017online}
Chen, T., Ling, Q., and Giannakis, G.~B.
\newblock An online convex optimization approach to proactive network resource
  allocation.
\newblock \emph{IEEE Transactions on Signal Processing}, 65\penalty0
  (24):\penalty0 6350--6364, 2017.

\bibitem[Crammer et~al.(2006)Crammer, Dekel, Keshet, Shalev-Shwartz, and
  Singer]{crammer2006online}
Crammer, K., Dekel, O., Keshet, J., Shalev-Shwartz, S., and Singer, Y.
\newblock Online passive aggressive algorithms.
\newblock \emph{Journal of Machine Learning Research}, 7:\penalty0 551--585,
  2006.

\bibitem[Gentile \& Warmuth(1999)Gentile and Warmuth]{gentile1999linear}
Gentile, C. and Warmuth, M.~K.
\newblock Linear hinge loss and average margin.
\newblock In \emph{Advances in Neural Information Processing Systems}, pp.\
  225--231, 1999.

\bibitem[Goldfarb \& Tucker(2011)Goldfarb and Tucker]{goldfarb2011online}
Goldfarb, A. and Tucker, C.
\newblock Online display advertising: Targeting and obtrusiveness.
\newblock \emph{Marketing Science}, 30\penalty0 (3):\penalty0 389--404, 2011.

\bibitem[Gordon(1999)]{gordon1999regret}
Gordon, G.~J.
\newblock Regret bounds for prediction problems.
\newblock In \emph{Conference on Learning Theory}, pp.\  29--40, 1999.

\bibitem[Hazan(2016)]{hazan2016introduction}
Hazan, E.
\newblock Introduction to online convex optimization.
\newblock \emph{Foundations and Trends in Optimization}, 2\penalty0
  (3-4):\penalty0 157--325, 2016.

\bibitem[Hazan et~al.(2007)Hazan, Agarwal, and Kale]{hazan2007logarithmic}
Hazan, E., Agarwal, A., and Kale, S.
\newblock Logarithmic regret algorithms for online convex optimization.
\newblock \emph{Machine Learning}, 69\penalty0 (2-3):\penalty0 169--192, 2007.

\bibitem[Jadbabaie et~al.(2015)Jadbabaie, Rakhlin, Shahrampour, and
  Sridharan]{jadbabaie2015online}
Jadbabaie, A., Rakhlin, A., Shahrampour, S., and Sridharan, K.
\newblock Online optimization: Competing with dynamic comparators.
\newblock In \emph{International Conference on Artificial Intelligence and
  Statistics}, pp.\  398--406, 2015.

\bibitem[Jenatton et~al.(2016)Jenatton, Huang, and
  Archambeau]{jenatton2016adaptive}
Jenatton, R., Huang, J., and Archambeau, C.
\newblock Adaptive algorithms for online convex optimization with long-term
  constraints.
\newblock In \emph{International Conference on Machine Learning}, pp.\
  402--411, 2016.

\bibitem[Li et~al.(2020)Li, Yi, and Xie]{Li2018distributed}
Li, X., Yi, X., and Xie, L.
\newblock Distributed online optimization for multi-agent networks with coupled
  inequality constraints.
\newblock \emph{IEEE Transactions on Automatic Control}, 2020.

\bibitem[Mahdavi et~al.(2012)Mahdavi, Jin, and Yang]{mahdavi2012trading}
Mahdavi, M., Jin, R., and Yang, T.
\newblock Trading regret for efficiency: Online convex optimization with long
  term constraints.
\newblock \emph{Journal of Machine Learning Research}, 13\penalty0
  (81):\penalty0 2503--2528, 2012.

\bibitem[Mokhtari et~al.(2016)Mokhtari, Shahrampour, Jadbabaie, and
  Ribeiro]{mokhtari2016online}
Mokhtari, A., Shahrampour, S., Jadbabaie, A., and Ribeiro, A.
\newblock Online optimization in dynamic environments: Improved regret rates
  for strongly convex problems.
\newblock In \emph{IEEE Conference on Decision and Control}, pp.\  7195--7201,
  2016.

\bibitem[Neely \& Yu(2017)Neely and Yu]{neely2017online}
Neely, M.~J. and Yu, H.
\newblock Online convex optimization with time-varying constraints.
\newblock \emph{arXiv:1702.04783}, 2017.

\bibitem[Shalev-Shwartz(2012)]{shalev2012online}
Shalev-Shwartz, S.
\newblock Online learning and online convex optimization.
\newblock \emph{Foundations and Trends in Machine Learning}, 4\penalty0
  (2):\penalty0 107--194, 2012.

\bibitem[Sun et~al.(2017)Sun, Dey, and Kapoor]{sun2017safety}
Sun, W., Dey, D., and Kapoor, A.
\newblock Safety-aware algorithms for adversarial contextual bandit.
\newblock In \emph{International Conference on Machine Learning}, pp.\
  3280--3288, 2017.

\bibitem[van Erven \& Koolen(2016)van Erven and Koolen]{NIPS2016_14cfdb59}
van Erven, T. and Koolen, W.~M.
\newblock Metagrad: Multiple learning rates in online learning.
\newblock In \emph{Advances in Neural Information Processing Systems}, pp.\
  3666--3674, 2016.

\bibitem[Yi et~al.(2020{\natexlab{a}})Yi, Li, Xie, and
  Johansson]{yi2020distributed}
Yi, X., Li, X., Xie, L., and Johansson, K.~H.
\newblock Distributed online convex optimization with time-varying coupled
  inequality constraints.
\newblock \emph{IEEE Transactions on Signal Processing}, 68:\penalty0 731--746,
  2020{\natexlab{a}}.

\bibitem[Yi et~al.(2020{\natexlab{b}})Yi, Li, Yang, Xie, Johansson, and
  Chai]{yi2019distributed}
Yi, X., Li, X., Yang, T., Xie, L., Johansson, K.~H., and Chai, T.
\newblock Distributed bandit online convex optimization with time-varying
  coupled inequality constraints.
\newblock \emph{IEEE Transactions on Automatic Control}, 2020{\natexlab{b}}.

\bibitem[Yi et~al.(2021)Yi, Li, Yang, Xie, Chai, and Johansson]{yi2021regret}
Yi, X., Li, X., Yang, T., Xie, L., Chai, T., and Johansson, K.~H.
\newblock Regret and cumulative constraint violation analysis for distributed
  online constrained convex optimization.
\newblock \emph{arXiv preprint arXiv:2105.00321}, 2021.

\bibitem[Yu \& Neely(2020)Yu and Neely]{yu2020lowJMLR}
Yu, H. and Neely, M.~J.
\newblock A low complexity algorithm with $ {O}(\sqrt{T})$ regret and $ {O}(1)$
  constraint violations for online convex optimization with long term
  constraints.
\newblock \emph{Journal of Machine Learning Research}, 21\penalty0
  (1):\penalty0 1--24, 2020.

\bibitem[Yu et~al.(2017)Yu, Neely, and Wei]{yu2017online}
Yu, H., Neely, M., and Wei, X.
\newblock Online convex optimization with stochastic constraints.
\newblock In \emph{Advances in Neural Information Processing Systems}, pp.\
  1428--1438, 2017.

\bibitem[Yuan et~al.(2021{\natexlab{a}})Yuan, Proutiere, and
  Shi]{yuan2021distributed}
Yuan, D., Proutiere, A., and Shi, G.
\newblock Distributed online linear regression.
\newblock \emph{IEEE Transactions on Information Theory}, 67\penalty0
  (1):\penalty0 616--639, 2021{\natexlab{a}}.

\bibitem[Yuan et~al.(2021{\natexlab{b}})Yuan, Proutiere, and
  Shi]{yuan2021distributedtac}
Yuan, D., Proutiere, A., and Shi, G.
\newblock Distributed online optimization with long-term constraints.
\newblock \emph{IEEE Transactions on Automatic Control}, 2021{\natexlab{b}}.

\bibitem[Yuan \& Lamperski(2018)Yuan and Lamperski]{NIPS2018_7852}
Yuan, J. and Lamperski, A.
\newblock Online convex optimization for cumulative constraints.
\newblock In \emph{Advances in Neural Information Processing Systems}, pp.\
  6140--6149, 2018.

\bibitem[Zhang(2020)]{ijcai2020-731}
Zhang, L.
\newblock Online learning in changing environments.
\newblock In \emph{International Joint Conference on Artificial Intelligence},
  pp.\  5178--5182, 2020.

\bibitem[Zhang et~al.(2017)Zhang, Yang, Yi, Rong, and Zhou]{zhang2017improved}
Zhang, L., Yang, T., Yi, J., Rong, J., and Zhou, Z.-H.
\newblock Improved dynamic regret for non-degenerate functions.
\newblock In \emph{Advances in Neural Information Processing Systems}, pp.\
  732--741, 2017.

\bibitem[Zhang et~al.(2018{\natexlab{a}})Zhang, Lu, and
  Zhou]{zhang2018adaptive}
Zhang, L., Lu, S., and Zhou, Z.-H.
\newblock Adaptive online learning in dynamic environments.
\newblock In \emph{Advances in Neural Information Processing Systems}, pp.\
  1323--1333, 2018{\natexlab{a}}.

\bibitem[Zhang et~al.(2018{\natexlab{b}})Zhang, Yang, Jin, and
  Zhou]{zhang2018dynamic}
Zhang, L., Yang, T., Jin, R., and Zhou, Z.-H.
\newblock Dynamic regret of strongly adaptive methods.
\newblock In \emph{International Conference on Machine Learning}, pp.\
  5882--5891, 2018{\natexlab{b}}.

\bibitem[Zhang et~al.(2019)Zhang, Liu, and Zhou]{pmlr-v97-zhang19}
Zhang, L., Liu, T.-Y., and Zhou, Z.-H.
\newblock Adaptive regret of convex and smooth functions.
\newblock In \emph{International Conference on Machine Learning}, pp.\
  7414--7423, 2019.

\bibitem[Zhang et~al.(2020)Zhang, Zhao, and Zhou]{pmlr-v124-zhang20a}
Zhang, Y.-J., Zhao, P., and Zhou, Z.-H.
\newblock A simple online algorithm for competing with dynamic comparators.
\newblock In \emph{Conference on Uncertainty in Artificial Intelligence}, pp.\
  390--399, 2020.

\bibitem[Zhao \& Zhang(2020)Zhao and Zhang]{zhao2020improved}
Zhao, P. and Zhang, L.
\newblock Improved analysis for dynamic regret of strongly convex and smooth
  functions.
\newblock \emph{arXiv:2006.05876}, 2020.

\bibitem[Zhao et~al.(2020)Zhao, Zhang, Zhang, and Zhou]{zhao2020dynamic}
Zhao, P., Zhang, Y.-J., Zhang, L., and Zhou, Z.-H.
\newblock Dynamic regret of convex and smooth functions.
\newblock In \emph{Advances in Neural Information Processing Systems}, 2020.

\bibitem[Zinkevich(2003)]{zinkevich2003online}
Zinkevich, M.
\newblock Online convex programming and generalized infinitesimal gradient
  ascent.
\newblock In \emph{International Conference on Machine Learning}, pp.\
  928--936, 2003.

\end{thebibliography}
\bibliographystyle{icml2021}

\appendix

\section{Useful Lemmas}\label{online_op:app-lemmas}

The following results are used in the proofs.

\begin{lemma}\label{online_op:lemma_mirror}
Suppose that  $h:\mathbb{S}\rightarrow\mathbb{R}$ is a convex function with $\mathbb{S}$ being a convex and closed set in $\mathbb{R}^p$. Moreover, assume that $\partial h(x),~\forall x\in\mathbb{S}$, exists. Given $z\in\mathbb{S}$,
the  projection
\begin{align*}
y=\argmin_{x\in\mathbb{S}}\{h(x)+\|x-z\|^2\},
\end{align*}
satisfies
\begin{align*}
&\langle y-x,\partial h(y)\rangle\nonumber\\
&\le \|x-z\|^2-\|x-y\|^2-\|y-z\|^2,~\forall x\in\mathbb{S}.
\end{align*}
\end{lemma}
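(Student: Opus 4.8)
The plan is to treat this as the standard three-point (proximal/mirror-descent) lemma. The key structural fact is that $y$ minimizes the convex function $\phi(x):=h(x)+\|x-z\|^2$ over $\mathbb{S}$, and the quadratic penalty is smooth, so the first-order optimality condition couples a subgradient of $h$ at $y$ with the displacement $y-z$. First I would observe that $\phi$ is convex, being the sum of the convex $h$ and the convex quadratic $\|x-z\|^2$, and that $y$ is well-defined as its minimizer over the convex closed set $\mathbb{S}$ (in fact unique, thanks to the strongly convex quadratic term, though uniqueness is not needed here).

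Next I would invoke the first-order optimality condition for constrained convex minimization. Since the penalty $\|x-z\|^2$ is differentiable with gradient $2(y-z)$ while $h$ is nonsmooth but subdifferentiable, additivity of the subdifferential (valid because one summand is differentiable everywhere) supplies a subgradient $\partial h(y)$ for which the variational inequality
\begin{align*}
\langle \partial h(y)+2(y-z),\,x-y\rangle\ge 0,\quad\forall x\in\mathbb{S},
\end{align*}
holds. Identifying this subgradient with the one named in the statement and rearranging gives
\begin{align*}
\langle y-x,\partial h(y)\rangle\le 2\langle y-z,\,x-y\rangle,\quad\forall x\in\mathbb{S}.
\end{align*}

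Finally I would eliminate the inner product on the right-hand side by the elementary identity
\begin{align*}
2\langle y-z,\,x-y\rangle=\|x-z\|^2-\|x-y\|^2-\|y-z\|^2,
\end{align*}
which follows from expanding $\|x-z\|^2=\|(x-y)+(y-z)\|^2$. Substituting this into the previous display yields exactly the claimed bound.

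The step I expect to be the main obstacle, and really the only point demanding care, is the optimality condition: one must justify writing the subgradient of $\phi$ at $y$ as $\partial h(y)+2(y-z)$ and ensure that the chosen $\partial h(y)$ is precisely the one realizing the variational inequality. This is where convexity of $\mathbb{S}$ and smoothness of the quadratic term enter; the remainder is algebraic.
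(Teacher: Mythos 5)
Your proof is correct and is essentially the canonical argument; note that the paper itself gives no in-line proof of this lemma, stating only that it is a special case of Lemma~1 in \citet{yi2020distributed}, and your derivation (first-order optimality for $\phi(x)=h(x)+\|x-z\|^2$ via the subdifferential sum rule, then the identity $2\langle y-z,x-y\rangle=\|x-z\|^2-\|x-y\|^2-\|y-z\|^2$) is exactly the standard proof underlying that cited result, specialized from a general Bregman divergence to the Euclidean case. You also correctly isolate the one genuinely delicate point: the optimality condition produces the variational inequality for \emph{some} $s\in\partial h(y)$, and the conclusion holds for that particular subgradient --- consistent with the paper's convention that $\partial h(y)$ denotes a chosen subgradient --- whereas for an arbitrary element of the subdifferential the argument would only give the weaker bound $\langle y-x,\partial h(y)\rangle\le \|x-z\|^2-\|y-z\|^2$, missing the $-\|x-y\|^2$ term.
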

This lemma is a special case of Lemma~1 in \citet{yi2020distributed}.

\begin{lemma}\label{online_op:lemma_expert}
Suppose that  $\{h_t:\mathbb{S}\rightarrow\mathbb{R}\}$ is a sequence of convex functions with $\mathbb{S}$ being a convex set in $\mathbb{R}^p$. Assume there exists a constant $F_h>0$ such that $|h_t(x)|\le F_h,~\forall x\in\mathbb{S},~t\in\mathbb{N}_+$. Let $N\in\mathbb{N}_+$ and $\beta>0$ be constants. For each $i\in[N]$, let $\{x_{i,t}\}$ be a sequence in $\mathbb{S}$. Then, for any given $w_{i,1}\in(0,1)$ satisfying $\sum_{i=1}^{N}w_{i,1}=1$, the sequence $\{x_t\}$ generated by
\begin{align*}
w_{i,t}&=\frac{w_{i,t-1}e^{-\beta h_{t-1}(x_{i,t-1})}}{\sum_{i=1}^{N}w_{i,t-1}e^{-\beta h_{t-1}(x_{i,t-1})}},\\
x_{t}&=\sum_{i=1}^{N}w_{i,t}x_{i,t}.
\end{align*}
satisfies
\begin{align*}
\sum_{t=1}^{T}h_t(x_t)-\min_{i\in[N]}\Big\{\sum_{t=1}^{T}h_t(x_{i,t})+\frac{1}{\beta}\ln\frac{1}{w_{i,1}}\Big\}\le\frac{\beta F_h^2T}{2}.
\end{align*}
\end{lemma}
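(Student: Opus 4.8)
The plan is to recognize this as the standard exponentially weighted forecaster (Hedge) guarantee and to establish it through a potential-function argument. First I would reduce the left-hand side to a statement about weighted average losses. Writing $\ell_{i,t}:=h_t(x_{i,t})$ and observing that the weight update in the statement keeps $\{w_{i,t}\}_{i\in[N]}$ a probability distribution at every round (by induction, since it is normalized by construction and $\sum_{i=1}^N w_{i,1}=1$), the convexity of $h_t$ together with $x_t=\sum_{i=1}^N w_{i,t}x_{i,t}$ gives, via Jensen's inequality, $h_t(x_t)\le\sum_{i=1}^N w_{i,t}\ell_{i,t}=:L_t$. Hence it suffices to bound $\sum_{t=1}^T L_t$ by the same right-hand side.

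Next I would introduce the potential. Define the unnormalized weights $\tilde{w}_{i,t}:=w_{i,1}\exp(-\beta\sum_{s=1}^{t-1}\ell_{i,s})$ and the normalizer $W_t:=\sum_{i=1}^N\tilde{w}_{i,t}$, so that $w_{i,t}=\tilde{w}_{i,t}/W_t$ and $W_1=1$. A direct computation then yields the telescoping identity $W_{t+1}/W_t=\sum_{i=1}^N w_{i,t}e^{-\beta\ell_{i,t}}$. The key step is to bound this ratio: since $|\ell_{i,t}|\le F_h$, each $\ell_{i,t}$ lies in an interval of length $2F_h$, so Hoeffding's lemma applied to the random variable that takes value $\ell_{i,t}$ with probability $w_{i,t}$ gives $\sum_{i=1}^N w_{i,t}e^{-\beta\ell_{i,t}}\le\exp(-\beta L_t+\beta^2 F_h^2/2)$. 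Taking logarithms, telescoping over $t=1,\dots,T$, and using $\ln W_1=0$ produces $\ln W_{T+1}\le-\beta\sum_{t=1}^T L_t+\beta^2 F_h^2 T/2$.

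Finally I would extract the per-expert comparison. For any fixed $i\in[N]$, retaining only the $i$-th summand in $W_{T+1}=\sum_{j=1}^N w_{j,1}\exp(-\beta\sum_{s=1}^T\ell_{j,s})$ gives the lower bound $\ln W_{T+1}\ge\ln w_{i,1}-\beta\sum_{s=1}^T\ell_{i,s}$. Combining the two bounds on $\ln W_{T+1}$, rearranging, and dividing by $\beta>0$ yields $\sum_{t=1}^T L_t\le\sum_{t=1}^T\ell_{i,t}+\frac{1}{\beta}\ln\frac{1}{w_{i,1}}+\frac{\beta F_h^2 T}{2}$; taking the minimum over $i$ and recalling $\sum_{t=1}^T h_t(x_t)\le\sum_{t=1}^T L_t$ from the first step gives the claim. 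I expect the main obstacle to be the exponential-moment bound on the potential ratio: the losses $\ell_{i,t}$ can have either sign (they are only bounded in absolute value, not assumed nonnegative), which is precisely why Hoeffding's lemma is needed rather than the elementary inequality $e^{-\beta\ell}\le 1-(1-e^{-\beta})\ell$ valid only for $\ell\in[0,1]$, and this is also where the constant $F_h^2/2$, and hence the final term $\beta F_h^2 T/2$, originates.
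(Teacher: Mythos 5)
Your proposal is correct and follows essentially the same route as the paper, which proves this lemma simply by deferring to Lemma~1 of \citet{zhang2018adaptive}, itself the standard exponentially weighted forecaster analysis. Every step you give matches that argument: Jensen's inequality to reduce $h_t(x_t)$ to the weighted losses, the potential $W_t$ with the telescoped ratio bound $\sum_{i=1}^N w_{i,t}e^{-\beta \ell_{i,t}}\le \exp(-\beta L_t+\beta^2F_h^2/2)$ via Hoeffding's lemma on $[-F_h,F_h]$ (which correctly yields $(2F_h)^2/8=F_h^2/2$ and hence the term $\beta F_h^2T/2$), and the single-expert lower bound $\ln W_{T+1}\ge \ln w_{i,1}-\beta\sum_{s=1}^{T}\ell_{i,s}$.
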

The proof of this lemma follows the proof of Lemma~1 in \citet{zhang2018adaptive}.

\section{Proof of Lemma~\ref{online_op:theoremreg_alg2}}\label{online_op:theoremreg_alg2proof}
To prove Lemma~\ref{online_op:theoremreg_alg2}, we need the following result.

\begin{lemma}\label{online_op:lemma_regretdelta_alg2}
Suppose Assumptions~\ref{online_op:assfunction}--\ref{online_op:ass_subgradient} hold. Let $\{x_{t}\}$ be the sequence generated by Algorithm \ref{online_op:algorithm2} and $\{y_{t}\}$ be an arbitrary sequence in $\mathbb{X}$, then
\begin{align}\label{online_op:lemma_regretdeltaequ_alg2}
&\gamma_{t+1}q_{t}^\top [g(x_{t+1})]_+
+f_{t}(x_{t})-f_{t}(y_t)\nonumber\\
&\le \gamma_{t+1}\hat{q}_{t}^\top [g(y_{t})]_+
+\Delta_{t}(y_t)-\frac{1}{2\alpha_{t}}\|x_{t}-x_{t+1}\|^2\nonumber\\
&\quad+\frac{G^2\alpha_{t}}{2}
-\gamma_{t+1}\gamma_{t}([g(x_{t})]_+)^\top [g(x_{t+1})]_+.
\end{align}
\end{lemma}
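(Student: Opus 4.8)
The plan is to apply the proximal inequality of Lemma~\ref{online_op:lemma_mirror} to the update \eqref{online_op:al_x2} and then to linearize both the loss and the clipped constraint by convexity. I would write the objective minimized in \eqref{online_op:al_x2} as $h(x)+\|x-x_t\|^2$ with
\begin{align*}
h(x)=\alpha_t\langle\partial f_t(x_t),x\rangle+\alpha_t\gamma_{t+1}\langle\hat{q}_t,[g(x)]_+\rangle,
\end{align*}
and first check that $h$ is convex on $\mathbb{X}$: since $q_0=\bm{0}_m$, the recursions \eqref{online_op:al_q2}--\eqref{online_op:al_qhat2} give $\hat{q}_t\ge\bm{0}_m$ by induction, so $x\mapsto\langle\hat{q}_t,[g(x)]_+\rangle$ is a nonnegative combination of the convex functions $[g_i(\cdot)]_+$. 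Applying Lemma~\ref{online_op:lemma_mirror} with $z=x_t$, $y=x_{t+1}$, test point $x=y_t$, and dividing by $\alpha_t$, I would obtain
\begin{align*}
&\langle\partial f_t(x_t),x_{t+1}-y_t\rangle+\gamma_{t+1}\langle G_{t+1},x_{t+1}-y_t\rangle\\
&\le\frac{1}{\alpha_t}\big(\|y_t-x_t\|^2-\|y_t-x_{t+1}\|^2-\|x_{t+1}-x_t\|^2\big),
\end{align*}
where $G_{t+1}$ is the subgradient of $x\mapsto\langle\hat{q}_t,[g(x)]_+\rangle$ at $x_{t+1}$ picked out by $\partial h(x_{t+1})$.

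Next I would convert the loss gap into this linearized quantity. Convexity of $f_t$ gives $f_t(x_t)-f_t(y_t)\le\langle\partial f_t(x_t),x_t-y_t\rangle$, and I would split $x_t-y_t=(x_t-x_{t+1})+(x_{t+1}-y_t)$, substituting the mirror inequality for the second summand. Recognizing $\Delta_t(y_t)=\frac{1}{\alpha_t}(\|y_t-x_t\|^2-\|y_t-x_{t+1}\|^2)$ leaves the residual terms $\langle\partial f_t(x_t),x_t-x_{t+1}\rangle$ and $-\frac{1}{\alpha_t}\|x_{t+1}-x_t\|^2$. Splitting the latter in half and using $\|\partial f_t(x_t)\|\le G$ together with the elementary bound $Gs-\tfrac{1}{2\alpha_t}s^2\le\tfrac{G^2\alpha_t}{2}$ (with $s=\|x_t-x_{t+1}\|$) would absorb the inner product into $\frac{G^2\alpha_t}{2}$ while retaining $-\frac{1}{2\alpha_t}\|x_t-x_{t+1}\|^2$.

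It then remains to rewrite the constraint subgradient term. By convexity of $\phi(x):=\langle\hat{q}_t,[g(x)]_+\rangle$ with subgradient $G_{t+1}$ at $x_{t+1}$, I have $\langle G_{t+1},x_{t+1}-y_t\rangle\ge\phi(x_{t+1})-\phi(y_t)$, hence $-\gamma_{t+1}\langle G_{t+1},x_{t+1}-y_t\rangle\le\gamma_{t+1}\langle\hat{q}_t,[g(y_t)]_+\rangle-\gamma_{t+1}\langle\hat{q}_t,[g(x_{t+1})]_+\rangle$. Finally I would expand $\hat{q}_t=q_t+\gamma_t[g(x_t)]_+$ from \eqref{online_op:al_qhat2} inside the last inner product; this produces both the cross term $-\gamma_{t+1}\gamma_t([g(x_t)]_+)^\top[g(x_{t+1})]_+$ and the term $\gamma_{t+1}q_t^\top[g(x_{t+1})]_+$ that is moved to the left-hand side, yielding \eqref{online_op:lemma_regretdeltaequ_alg2}.

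The hard part will be the careful bookkeeping around the clipped composite constraint. Two points need attention: establishing $\hat{q}_t\ge\bm{0}_m$ so that the mirror lemma and the subgradient inequality both apply to a genuinely convex $h$, and ensuring that the subgradient $G_{t+1}$ implicit in the optimality condition of Lemma~\ref{online_op:lemma_mirror} is the same one used in the convexity inequality for $\phi$; the rule for $\partial[g(x)]_+$ recorded in the Basic Definitions makes this choice consistent. Everything else is a routine application of Young's inequality and regrouping.
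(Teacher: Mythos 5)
Your proposal is correct and follows essentially the same route as the paper's proof: apply Lemma~\ref{online_op:lemma_mirror} to the update \eqref{online_op:al_x2}, use convexity of $f_t$ plus Young's inequality to absorb $\langle\partial f_t(x_t),x_t-x_{t+1}\rangle$ into $\frac{G^2\alpha_t}{2}$ while retaining $-\frac{1}{2\alpha_t}\|x_t-x_{t+1}\|^2$, and use convexity of $x\mapsto\langle\hat{q}_t,[g(x)]_+\rangle$ (valid since $\hat{q}_t\ge\bm{0}_m$) before expanding $\hat{q}_t=q_t+\gamma_t[g(x_t)]_+$ to extract the cross term. The paper's $\omega_{t+1}$ is exactly your $\partial f_t(x_t)+\gamma_{t+1}G_{t+1}$, so the two arguments coincide up to the order in which the steps are performed.
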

\begin{proof}

From $q_{0}=\bm{0}_{m}$, \eqref{online_op:al_q2}, and \eqref{online_op:al_qhat2}, it is straightforward to check that
\begin{align}\label{online_op:lemma_virtual_equ2_alg2}
q_{t}\ge\bm{0}_{m},~\hat{q}_{t}\ge\bm{0}_{m},~t\in\mathbb{N}_+.
\end{align}

From (\ref{online_op:subgradient}) and \eqref{online_op:subgupper}, we have
\begin{align}\label{online_op:fxy}
&f_{t}(x_{t})-f_{t}(y_t)
\le\langle\partial f_{t}(x_{t}),x_{t}-y_t\rangle\nonumber\\
&=\langle\partial f_{t}(x_{t}),x_{t}-x_{t+1}\rangle
+\langle\partial f_{t}(x_{t}),x_{t+1}-y_t\rangle\nonumber\\
&\le G\|x_{t}-x_{t+1}\|
+\langle\partial f_{t}(x_{t}),x_{t+1}-y_t\rangle\nonumber\\
&\le\frac{G^2\alpha_{t}}{2}+\frac{1}{2\alpha_{t}}\|x_{t}-x_{t+1}\|^2
+\langle\partial f_{t}(x_{t}),x_{t+1}-y_t\rangle.
\end{align}
For the second term of \eqref{online_op:fxy}, we have
\begin{align}
&\langle\partial f_{t}(x_{t}),x_{t+1}-y_t\rangle\nonumber\\
&=\gamma_{t+1}\langle(\partial [g(x_{t+1})]_+)^\top \hat{q}_{t},y_t-x_{t+1}\rangle\nonumber\\
&\quad+\langle\omega_{t+1},x_{t+1}-y_t\rangle,\label{online_op:fxy1}
\end{align}
where $\omega_{t+1}=\partial f_{t}(x_{t})+\gamma_{t+1}(\partial [g(x_{t+1})]_+)^\top \hat{q}_{t}$.

We next to find the upper bound of each term in the right-hand side of \eqref{online_op:fxy1}.

From $g(\cdot)$ is convex, it is straightforward to see that $[g(\cdot)]_+$ is also convex.
From (\ref{online_op:subgradient}) and \eqref{online_op:lemma_virtual_equ2_alg2}, we have
\begin{align}
&\gamma_{t+1}\langle(\partial [g(x_{t+1})]_+)^\top \hat{q}_{t},y_t-x_{t+1}\rangle\nonumber\\
&\le \gamma_{t+1}\hat{q}_{t}^\top [g(y_{t})]_+ -\gamma_{t+1}\hat{q}_{t}^\top [g(x_{t+1})]_+\nonumber\\
&= \gamma_{t+1}\hat{q}_{t}^\top [g(y_{t})]_+ -\gamma_{t+1}q_{t}^\top [g(x_{t+1})]_+\nonumber\\
&\quad-\gamma_{t+1}\gamma_{t}([g(x_{t})]_+)^\top [g(x_{t+1})]_+
.\label{online_op:gyxdelta}
\end{align}

Applying Lemma~\ref{online_op:lemma_mirror} to the update (\ref{online_op:al_x2}), we get
\begin{align}
&\langle\omega_{t+1},x_{t+1}-y_t\rangle\nonumber\\
&\le\frac{1}{\alpha_{t}}(\|y_t-x_{t}\|^2-\|y_t-x_{t+1}\|^2-\|x_{t+1}-x_{t}\|^2).
\label{online_op:omgea2}
\end{align}

Combining (\ref{online_op:fxy})--(\ref{online_op:omgea2}) and rearranging terms yields (\ref{online_op:lemma_regretdeltaequ_alg2}).
\end{proof}

We are now ready to prove Lemma~\ref{online_op:theoremreg_alg2}.

\noindent {\bf (i)} Noting that $g(y_t)\le{\bm 0}_{m}$ when $y_{[T]}\in\calX^{T}$,  and summing (\ref{online_op:lemma_regretdeltaequ_alg2}) over $t\in[T]$ gives \eqref{online_op:theoremregequ_alg2}.

\noindent {\bf (ii)}   From \eqref{online_op:al_q2} and \eqref{online_op:lemma_virtual_equ2_alg2}, we have
\begin{align*}
\|q_{t}\|_1&=\|q_{t-1}+\gamma_t[g(x_{t})]_+\|_1\\
&=\|q_{t-1}\|_1+\gamma_t\|[g(x_{t})]_+\|_1,
\end{align*}
which yields
\begin{align}\label{online_op:lemma_virtual_equ4_alg2}
&\|[g(x_{t})]_+\|_1
=\frac{1}{\gamma_t}(\|q_{t}\|_1-\|q_{t-1}\|_1)\nonumber\\
&=\frac{1}{\gamma_t}\|q_{t}\|_1-\frac{1}{\gamma_{t-1}}\|q_{t-1}\|_1
+\Big(\frac{1}{\gamma_{t-1}}-\frac{1}{\gamma_t}\Big)\|q_{t-1}\|_1.
\end{align}
Summing \eqref{online_op:lemma_virtual_equ4_alg2} over $t\in[T]$ yields
\begin{align*}
\sum_{t=1}^{T}\|[g(x_{t})]_+\|_1=\frac{1}{\gamma_T}\|q_{T}\|_1
+\sum_{t=1}^{T-1}\Big(\frac{1}{\gamma_{t}}-\frac{1}{\gamma_{t+1}}\Big)\|q_{t}\|_1.
\end{align*}
From the above inequality, $\|x\|\le\|x\|_1\le\sqrt{m}\|x\|,~\forall x\in\mathbb{R}^m$, and $\{\gamma_t\}$ is non-decreasing, we have \eqref{online_op:theoremconsequ_alg2}.

\noindent {\bf (iii)}   From \eqref{online_op:al_q2}, we have
\begin{align*}
\|q_{t}\|^2&=\|q_{t-1}+\gamma_{t}[g(x_{t})]_+\|^2\\
&=\|q_{t-1}\|^2+2\gamma_{t}q_{t-1}^\top[g(x_{t})]_+ +\|\gamma_{t}[g(x_{t})]_+\|^2,
\end{align*}
which implies
\begin{align}
&\frac{1}{2}(\|q_{t}\|^2-\|q_{t-1}\|^2)\nonumber\\
&=\gamma_{t}q_{t-1}^\top[g(x_{t})]_+ +\frac{1}{2}\|\gamma_{t}[g(x_{t})]_+\|^2.\label{online_op:lemma_virtual_equ5_alg2}
\end{align}

From Assumptions~\ref{online_op:assfunction} and \ref{online_op:ass_subgradient}, and Lemma~2.6 in \citet{shalev2012online}, it follows that
\begin{align}
\left\|g(x)-g(y)\right\|\le G\|x-y\|,~\forall x,~y\in \mathbb{X}.\label{online_op:assfunction:functionLipg}
\end{align}

We have
\begin{align}
&\gamma_{t+1}\gamma_{t}([g(x_{t})]_+)^\top [g(x_{t+1})]_+\nonumber\\
&=\frac{1}{2}(\|\gamma_{t}[g(x_{t})]_+\|^2 +\|\gamma_{t+1}[g(x_{t+1})]_+\|^2\nonumber\\
&\quad-\|\gamma_{t+1}[g(x_{t+1})]_+-\gamma_{t}[g(x_{t})]_+\|^2)\nonumber\\
&\ge\frac{1}{2}(\|\gamma_{t}[g(x_{t})]_+\|^2 +\|\gamma_{t+1}[g(x_{t+1})]_+\|^2)\nonumber\\
&\quad-\|\gamma_{t}[g(x_{t+1})]_+-\gamma_{t}[g(x_{t})]_+\|^2\nonumber\\
&\quad-\|(\gamma_{t+1}-\gamma_{t})[g(x_{t+1})]_+\|^2\nonumber\\
&\ge\frac{1}{2}(\|\gamma_{t}[g(x_{t})]_+\|^2 +\|\gamma_{t+1}[g(x_{t+1})]_+\|^2)\nonumber\\
&\quad-\gamma_{t}^2G^2\|x_{t+1}-x_{t}\|^2
-(\gamma_{t+1}-\gamma_{t})^2F^2
,\label{online_op:gxxtilde_alg2}
\end{align}
where the last inequality holds since \eqref{online_op:assfunction:functionLipg},  \eqref{online_op:ftgtupper}, and that the projection $[\cdot]_+$ is nonexpansive, i.e.,
\begin{align*}
\|[x]_+-[y]_+\|\le\|x-y\|,~\forall x,y\in\mathbb{R}^p.
\end{align*}

Noting that $g(y_t)\le{\bm 0}_{m}$ when $y_{[T]}\in\calX^{T}$, combining \eqref{online_op:lemma_virtual_equ5_alg2}, (\ref{online_op:lemma_regretdeltaequ_alg2}), and \eqref{online_op:gxxtilde_alg2}, and summing over $t\in[T]$ gives
\begin{align}\label{online_op:theoremregequ2_alg2_g}
&\frac{1}{2}\|q_{T+1}\|^2
+\frac{1}{2}\sum_{t=1}^T\|\gamma_{t}[g(x_{t})]_+\|^2\nonumber\\
&\le \frac{1}{2}\|q_{1}\|^2+\sum_{t=1}^T\Delta_{t}(y_t)+\sum_{t=1}^T\frac{G^2\alpha_{t}}{2}\nonumber\\
&\quad
+\sum_{t=1}^T(\gamma_{t+1}-\gamma_{t})^2F^2-\Reg(x_{[T]},y_{[T]})\nonumber\\
&\quad
+\sum_{t=1}^T\Big(\gamma_{t}^2G^2-\frac{1}{2\alpha_{t}}\Big)\|x_{t+1}-x_{t}\|^2.
\end{align}

From \eqref{online_op:ftgtupper}, we have
\begin{align}\label{online_op:ff}
-\Reg(x_{[T]},y_{[T]})\le FT.
\end{align}

 From \eqref{online_op:al_q2} and \eqref{online_op:ftgtupper}, we have
\begin{align}\label{online_op:ffq}
\frac{1}{2}\|q_{1}\|^2\le \frac{1}{2}\gamma_1^2F^2.
\end{align}

Combining (\ref{online_op:theoremregequ2_alg2_g})--\eqref{online_op:ffq}, and noting $\gamma_{t}^2=\gamma_0^2/\alpha_{t}$ with $\gamma_0\in(0,1/(\sqrt{2}G)]$ yields \eqref{online_op:theoremregequ_alg2_g}.

\section{Proof of Theorem~\ref{online_op:corollaryreg}}\label{online_op:corollaryregproof}

\noindent {\bf (i)} Using \eqref{online_op:stepsize1} and setting $y_t=\check{x}^*_T$ yields
\begin{align}
\sum_{t=1}^T\Delta_{t}(\check{x}^*_T)
&=\frac{T^c}{\alpha_0}\sum_{t=1}^T(\|\check{x}^*_T-x_{t}\|^2-\|\check{x}^*_T-x_{t+1}\|^2)\nonumber\\
&\le \frac{T^c}{\alpha_0}\|\check{x}^*_T-x_{1}\|^2.\label{online_op:dyz_alg2}
\end{align}

Setting $y_t=\check{x}^*_T$, and combining  (\ref{online_op:theoremregequ_alg2}), \eqref{online_op:stepsize1}, and (\ref{online_op:dyz_alg2}) yields
\begin{align*}
\Reg(x_{[T]},\check{x}^*_{[T]})\le \frac{T^c}{\alpha_0}\|\check{x}^*_T-x_{1}\|^2+\frac{G^2\alpha_0}{2}T^{1-c},
\end{align*}
which gives \eqref{online_op:corollaryregequ1}.

\noindent {\bf (ii)} Setting $y_t=\check{x}^*_T$, and combining  (\ref{online_op:theoremregequ_alg2_g}), \eqref{online_op:stepsize1}, and (\ref{online_op:dyz_alg2}) yields
\begin{align}\label{online_op:corollaryconsequ_proof1}
\|q_{T}\|^2\le \varepsilon_1T,
\end{align}
where $\varepsilon_1=\frac{2}{\alpha_0T^{1-c}}\|\check{x}^*_T-x_{1}\|^2+\frac{G^2\alpha_0}{T^c}
+\frac{\gamma_0^2F^2}{\alpha_0T}+2F$.

From \eqref{online_op:stepsize1}, \eqref{online_op:theoremconsequ_alg2}, and \eqref{online_op:corollaryconsequ_proof1}, we have
\begin{align}\label{online_op:corollaryconsequ_proof2}
\sum_{t=1}^{T} \|[g(x_{t})]_+\|
\le\frac{\sqrt{m\alpha_0}}{\gamma_0 T^{c/2}}\|q_{T}\|
\le\frac{\sqrt{m\alpha_0\varepsilon_1}}{\gamma_0}T^{(1-c)/2},
\end{align}
which yields \eqref{online_op:corollaryconsequ}.

\section{Proof of Corollary~\ref{online_op:corollaryreg_sc}}\label{online_op:corollaryregproof_sc}
\noindent {\bf (i)}  From \eqref{online_op:assstrongconvexequ}, we know that \eqref{online_op:fxy} can be replaced by
\begin{align}\label{online_op:fxy_sc}
&f_{t}(x_{t})-f_{t}(y_t)\nonumber\\
&\le \frac{G^2\alpha_{t}}{2}+\frac{1}{2\alpha_{t}}\|x_{t}-x_{t+1}\|^2-\mu\|x_t-y_t\|^2\nonumber\\
&\quad+\langle\partial f_{t}(x_{t}),x_{t+1}-y_t\rangle.
\end{align}
Note that compared with \eqref{online_op:fxy}, \eqref{online_op:fxy_sc} has an extra term $-\mu\|x_t-y_t\|^2$. Then, \eqref{online_op:theoremregequ_alg2} can be replaced by
\begin{align}
&\Reg(x_{[T]},y_{[T]})\nonumber\\
&\le \sum_{t=1}^T(\Delta_{t}(y_t)-\mu\|x_t-y_t\|^2)+\sum_{t=1}^{T}\frac{G^2}{2}\alpha_{t}.\label{online_op:theoremregequ_alg2_sc}
\end{align}

Using \eqref{online_op:stepsize2} and setting $y_t=\check{x}^*_T$ yields
\begin{align}
&\sum_{t=1}^T(\Delta_{t}(\check{x}^*_T)-\mu\|x_t-\check{x}^*_T\|^2)\nonumber\\
&=\sum_{t=1}^T
((t-1)\mu\|x_t-\check{x}^*_T\|^2-t\mu\|x_{t+1}-\check{x}^*_T\|^2)\nonumber\\
&\le0.\label{online_op:dyz_sc}
\end{align}

From \eqref{online_op:stepsize2}, we have
\begin{align}
\sum_{t=1}^T\alpha_{t}=\sum_{t=1}^T\frac{1}{t\mu}\le\int_{1}^{T+1}\frac{1}{t\mu}dt
=\frac{\log(T+1)}{\mu}.
\label{online_op:corollaryregequ11}
\end{align}

From \eqref{online_op:theoremregequ_alg2_sc}--\eqref{online_op:corollaryregequ11}, we have
\begin{align*}
\Reg(x_{[T]},\check{x}^*_{[T]})\le \frac{G^2\log(T+1)}{2\mu},
\end{align*}
which gives \eqref{online_op:corollaryregequ1_sc}.

\noindent {\bf (ii)} Noting \eqref{online_op:fxy_sc}, we can replace \eqref{online_op:theoremregequ_alg2_g} by
\begin{align}
\frac{1}{2}\|q_{T+1}\|^2&\le  \sum_{t=1}^T(\Delta_{t}(y_t)-\mu\|x_t-y_t\|^2)
+\sum_{t=1}^{T}\frac{G^2\alpha_{t}}{2}\nonumber\\
&\quad+\frac{1}{2}\gamma_1^2F^2+FT
+\sum_{t=1}^T(\gamma_{t+1}-\gamma_{t})^2F^2.\label{online_op:theoremregequ_alg2_g_sc}
\end{align}

From \eqref{online_op:stepsize2}, we have
\begin{align}
&\sum_{t=1}^T(\gamma_{t+1}-\gamma_{t})^2
=\sum_{t=1}^T\gamma_0^2\mu(\sqrt{t+1}-\sqrt{t})^2\nonumber\\
&\le\sum_{t=1}^T\gamma_0^2\mu\Big(\frac{1}{2\sqrt{t}}\Big)^2
\le\frac{\gamma_0^2\mu}{4}\log(T+1).
\label{online_op:corollaryregequ12}
\end{align}

Setting $y_t=\check{x}^*_T$ and combining \eqref{online_op:dyz_sc}--\eqref{online_op:corollaryregequ12} yields
\begin{align}\label{online_op:corollaryconsequ_proof1_sc}
\|q_{T}\|^2\le \varepsilon_2T,~\forall T\in\mathbb{N}_+,
\end{align}
where $\varepsilon_2=\frac{G^2}{\mu}+\gamma_0^2F^2\mu +2F+\frac{\gamma_0^2F^2\mu}{2}$.

From \eqref{online_op:stepsize2}, \eqref{online_op:corollaryconsequ_proof1_sc}, and \eqref{online_op:theoremconsequ_alg2}, we have
\begin{align*}
&\sum_{t=1}^{T} \|[g(x_{t})]_+\|\nonumber\\
&\le\frac{\sqrt{m}}{\gamma_0\sqrt{\mu}}\Big(\frac{1}{\sqrt{T}}\|q_{T}\|
+\sum_{t=1}^{T-1}\Big(\frac{1}{\sqrt{t}}-\frac{1}{\sqrt{t+1}}\Big)
\|q_{t}\|\Big)\nonumber\\
&\le\frac{\sqrt{m\varepsilon_2}}{\gamma_0\sqrt{\mu}}\Big(1+\sum_{t=1}^{T-1}
\Big(\frac{1}{\sqrt{t}}-\frac{1}{\sqrt{t+1}}\Big)\sqrt{t}\Big)\nonumber\\
&=\frac{\sqrt{m\varepsilon_2}}{\gamma_0\sqrt{\mu}}\Big(1+\sum_{t=1}^{T-1}
\Big(\Big(\frac{1}{t}+1\Big)^{\frac{1}{2}}-1\Big)\frac{\sqrt{t}}{\sqrt{t+1}}\Big)\nonumber\\
&\le\frac{\sqrt{m\varepsilon_2}}{\gamma_0\sqrt{\mu}}\Big(1+\sum_{t=1}^{T-1}
\frac{1}{2t}\Big)\nonumber\\
&\le\frac{\sqrt{m\varepsilon_2}}{\gamma_0\sqrt{\mu}}\Big(1+
\frac{1}{2}\log(T)\Big),
\end{align*}
which yields \eqref{online_op:corollaryconsequ_sc}.

\section{Proof of Theorem~\ref{online_op:corollaryreg_dr}}\label{online_op:corollaryregproof_dr}
\noindent {\bf (i)}  Using \eqref{online_op:domainupper} and \eqref{online_op:stepsize1_dr}  yields
\begin{align}
&\sum_{t=1}^T\Delta_{t}(y_t)
=\sum_{t=1}^T \frac{t^c}{\alpha_0}(\|y_t-x_{t}\|^2-\|y_t-x_{t+1}\|^2)\nonumber\\
&=\frac{1}{\alpha_0}\sum_{t=1}^T(t^c \|y_t-x_{t}\|^2-(t+1)^c\|y_{t+1}-x_{t+1}\|^2\nonumber\\
&\quad+(t+1)^c\|y_{t+1}-x_{t+1}\|^2-t^c\|y_{t+1}-x_{t+1}\|^2\nonumber\\
&\quad+t^c\|y_{t+1}-x_{t+1}\|^2-t^c\|y_{t}-x_{t+1}\|^2)\nonumber\\
&\le \frac{1}{\alpha_0}\|y_1-x_{1}\|^2
+\frac{1}{\alpha_0}\sum_{t=1}^T((t+1)^c-t^c)(d(\mathbb{X}))^2\nonumber\\
&\quad+\frac{2}{\alpha_0}\sum_{t=1}^Tt^cd(\mathbb{X})\|y_{t+1}-y_t\|\nonumber\\
&\le \frac{1}{\alpha_0}(1+(T+1)^c-1)(d(\mathbb{X}))^2
+\frac{2T^cd(\mathbb{X})}{\alpha_0}P_T\nonumber\\
&\le \frac{2}{\alpha_0}(d(\mathbb{X}))^2T^c\Big(1+\frac{P_T}{d(\mathbb{X})}\Big).
\label{online_op:dyz_alg2_dr}
\end{align}

From \eqref{online_op:stepsize1_dr}, we have
\begin{align}
\sum_{t=1}^T\alpha_{t}&=\sum_{t=1}^T\frac{\alpha_0}{t^c}\le\int_1^T\frac{\alpha_0}{t^c}dt+\alpha_0\nonumber\\
&=\frac{\alpha_0(T^{1-c}-c)}{1-c}\le\frac{\alpha_0T^{1-c}}{1-c}.
\label{online_op:corollaryregequ11_dr}
\end{align}

From  (\ref{online_op:theoremregequ_alg2}), \eqref{online_op:dyz_alg2_dr}, and (\ref{online_op:corollaryregequ11_dr}), we have
\begin{align}\label{online_op:corollaryregequ1_proof_dr}
&\Reg(x_{[T]},y_{[T]})\nonumber\\&\le \frac{2}{\alpha_0}(d(\mathbb{X}))^2T^c\Big(1+\frac{P_T}{d(\mathbb{X})}\Big)
+\frac{G^2\alpha_0}{2(1-c)}T^{1-c},
\end{align}
which gives \eqref{online_op:corollaryregequ1_dr}.

\noindent {\bf (ii)} From \eqref{online_op:stepsize1_dr}, we have
\begin{align}
\sum_{t=1}^T(\gamma_{t+1}-\gamma_{t})^2F^2\le\sum_{t=1}^T(\gamma_{t+1}^2-\gamma_{t}^2)F^2
\le\frac{\gamma_0^2F^2}{\alpha_0}T^c.
\label{online_op:corollaryregequ12_dr}
\end{align}

 Using \eqref{online_op:dyz_alg2_dr} and setting $y_t=\check{x}^*_T$ yields
\begin{align}
\sum_{t=1}^T\Delta_{t}(\check{x}^*_T)
\le \frac{2}{\alpha_0}(d(\mathbb{X}))^2T^c.\label{online_op:dyz_alg2_dr2}
\end{align}

Noting that (\ref{online_op:theoremregequ_alg2_g}) holds for all $y_{[T]}\in\calX^{T}$, setting $y_t=\check{x}^*_T$, and combining \eqref{online_op:domainupper}, (\ref{online_op:theoremregequ_alg2_g}), (\ref{online_op:dyz_alg2_dr2}), (\ref{online_op:corollaryregequ11_dr}), and \eqref{online_op:corollaryregequ12_dr},   yields
\begin{align}\label{online_op:corollaryconsequ_proof1_dr}
\|q_{T}\|^2\le \varepsilon_3T,
\end{align}
where $\varepsilon_3=\frac{4(d(\mathbb{X}))^2}{\alpha_0T^{1-c}}+\frac{G^2\alpha_0}{(1-c)T^c}
+\frac{\gamma_0^2F^2}{\alpha_0T}+\frac{2\gamma_0^2F^2}{\alpha_0T^{1-c}}+2F$.

From  \eqref{online_op:stepsize1_dr}, \eqref{online_op:corollaryconsequ_proof1_dr}, and \eqref{online_op:theoremconsequ_alg2}, we have
\begin{align}\label{online_op:corollaryconsequ_proof2_dr}
&\sum_{t=1}^{T} \|[g(x_{t})]_+\|\nonumber\\
&\le\frac{\sqrt{m\alpha_0}}{\gamma_0}\Big(\frac{1}{T^{c/2}}\|q_{T}\|
+\sum_{t=1}^{T-1}\Big(\frac{1}{t^{c/2}}-\frac{1}{(t+1)^{c/2}}\Big)
\|q_{t}\|\Big)\nonumber\\
&\le\frac{\sqrt{m\alpha_0\varepsilon_3}}{\gamma_0}\Big(T^{(1-c)/2}+\sum_{t=1}^{T-1}
\Big(\frac{1}{t^{c/2}}-\frac{1}{(t+1)^{c/2}}\Big)\sqrt{t}\Big)\nonumber\\
&=\frac{\sqrt{m\alpha_0\varepsilon_3}}{\gamma_0}\Big(T^{(1-c)/2}\nonumber\\
&\quad+\sum_{t=1}^{T-1}
\Big(\Big(\frac{1}{t}+1\Big)^{c/2}-1\Big)\frac{\sqrt{t}}{(t+1)^{c/2}}\Big)\nonumber\\
&\le\frac{\sqrt{m\alpha_0\varepsilon_3}}{\gamma_0}\Big(T^{(1-c)/2}\nonumber\\
&\quad+\sum_{t=1}^{T-1}
\Big(\Big(\frac{1}{t}+1\Big)^{1/2}-1\Big)t^{(1-c)/2}\Big)\nonumber\\
&\le\frac{\sqrt{m\alpha_0\varepsilon_3}}{\gamma_0}\Big(T^{(1-c)/2}+\sum_{t=1}^{T-1}
\Big(1+\frac{1}{2t}-1\Big)t^{(1-c)/2}\Big)\nonumber\\
&=\frac{\sqrt{m\alpha_0\varepsilon_3}}{\gamma_0}\Big(T^{(1-c)/2}+\sum_{t=1}^{T-1}
\frac{1}{2t^{(1+c)/2}}\Big)\nonumber\\
&\le\frac{\sqrt{m\alpha_0\varepsilon_3}}{\gamma_0}\Big(T^{(1-c)/2}+\frac{1}{1-c}T^{(1-c)/2}\Big),
\end{align}
which yields \eqref{online_op:corollaryconsequ_dr}.

\section{Proof of Theorem~\ref{online_op:corollaryreg_alg1}}\label{online_op:corollaryregproof_alg1}
\noindent {\bf (i)} Noting that $|\ell_t(x)|\le Gd(\mathbb{X})$ and $\|\partial\ell_t(x)\|\le G$ due to Assumptions~\ref{online_op:ass_subgradient} and \ref{online_op:ass_set_bounded}, and that for each $i\in[N]$, the updating equations \eqref{online_op:al_q}--\eqref{online_op:al_xi} are exactly \eqref{online_op:al_q2}--\eqref{online_op:al_x2} for solving constrained online convex optimization with loss functions $\{\ell_t(x)\}$, similar to  get \eqref{online_op:corollaryregequ1_proof_dr} and \eqref{online_op:corollaryconsequ_proof2_dr}, for all $i\in[n]$ we have
\begin{align}
&\sum_{t=1}^{T}(\ell_t(x_{i,t})-\ell_t(y_t))\nonumber\\
&\le \frac{2}{\alpha_02^{i-1}}(d(\mathbb{X}))^2T^{c}\Big(1+\frac{P_T}{d(\mathbb{X})}\Big)
+G^2\alpha_02^{i-1}T^{1-c},\label{online_op:corollaryregequ1_proof_dr2}\\
&\sum_{t=1}^{T} \|[g(x_{i,t})]_+\|\nonumber\\
&\le\frac{\sqrt{m\alpha_02^{i-1}\varepsilon_{i,4}}}{\gamma_0}
\Big(T^{(1-c)/2}+\frac{1}{1-c}T^{(1-c)/2}\Big),
\label{online_op:corollaryconsequ_proof2_dr2}
\end{align}
where $\varepsilon_{i,4}=\frac{4(d(\mathbb{X}))^2}{\alpha_02^{i-1}T^{1-c}}+\frac{G^2\alpha_02^{i-1}}{(1-c)T^c}
+\frac{\gamma_0^2F^2}{\alpha_02^{i-1}T}+\frac{2\gamma_0^2F^2}{\alpha_02^{i-1}T^{1-c}}+2Gd(\mathbb{X})$.

From \eqref{online_op:domainupper}, we have
\begin{align*}
&1\le\Big(1+\frac{P_T}{d(\mathbb{X})}\Big)^\kappa\le(1+T)^\kappa,\\
&2^{N-1}\ge(1+T)^\kappa,
\end{align*}
which yield
\begin{align}
2^{i_0-1}\le\Big(1+\frac{P_T}{d(\mathbb{X})}\Big)^{\kappa}\le2^{i_0},\label{online_op:corollaryregequ12_alg1}
\end{align}
where $i_0=\lfloor \kappa\log_2(1+P_T/d(\mathbb{X}))\rfloor+1\in[N]$.

Combining \eqref{online_op:corollaryregequ1_proof_dr2} and \eqref{online_op:corollaryregequ12_alg1} yields
\begin{align}\label{online_op:corollaryregequ1_alg1_proof1}
\sum_{t=1}^{T}(\ell_t(x_{i_0,t})-\ell_t(y_t))
&\le  \frac{4}{\alpha_0}(d(\mathbb{X}))^2T^c\Big(1+\frac{P_T}{d(\mathbb{X})}\Big)^{1-\kappa}\nonumber\\
&\quad+\frac{G^2\alpha_0}{2(1-c)}T^{1-c}\Big(1+\frac{P_T}{d(\mathbb{X})}\Big)^\kappa.
\end{align}

Applying Lemma~\ref{online_op:lemma_expert} to \eqref{online_op:al_w} and \eqref{online_op:al_x} yields
\begin{align*}
&\sum_{t=1}^{T}\ell_t(x_t)-\min_{i\in[N]}\Big\{\sum_{t=1}^{T}\ell_t(x_{i,t})
+\frac{1}{\beta}\ln\frac{1}{w_{i,1}}\Big\}\nonumber\\
&\le\frac{\beta (Gd(\mathbb{X}))^2T}{2},
\end{align*}
which implies
\begin{align}\label{online_op:corollaryregequ1_alg1_proof2}
&\sum_{t=1}^{T}\ell_t(x_t)-\sum_{t=1}^{T}\ell_t(x_{i_0,t})
\le\frac{\beta (Gd(\mathbb{X}))^2T}{2}+\frac{1}{\beta}\ln\frac{1}{w_{i_0,1}}\nonumber\\
&=\frac{\beta_0 (Gd(\mathbb{X}))^2T^{1-c}}{2}+\frac{1}{\beta_0}T^c\ln\frac{1}{w_{i_0,1}}.
\end{align}

From $w_{i,1}=\frac{N+1}{i(i+1)N}$, we have
\begin{align}\label{online_op:corollaryregequ1_alg1_proof3}
\ln\frac{1}{w_{i_0,1}}&\le\ln(i_0(i_0+1))\le2\ln(i_0+1)\nonumber\\
&\le2\ln(\lfloor \kappa\log_2(1+P_T/d(\mathbb{X}))\rfloor).
\end{align}

From \eqref{online_op:ell} and that $f_t(x)$ is convex as assumed in Assumption~\ref{online_op:assfunction}, we have
\begin{align}\label{online_op:corollaryregequ1_alg1_proof4}
\ell_t(x_t)-\ell_t(y_t)=\langle \partial f_t(x_t),x_t-y_t\rangle\ge f_t(x_t)-f_t(y_t).
\end{align}

From \eqref{online_op:corollaryregequ1_alg1_proof1}--\eqref{online_op:corollaryregequ1_alg1_proof4}, we have
\begin{align*}
&\Reg(x_{[T]},y_{[T]})\nonumber\\
&\le \frac{4}{\alpha_0}(d(\mathbb{X}))^2T^c\Big(1+\frac{P_T}{d(\mathbb{X})}\Big)^{1-\kappa}
\nonumber\\
&\quad+\frac{G^2\alpha_0}{2(1-c)}T^{1-c}\Big(1+\frac{P_T}{d(\mathbb{X})}\Big)^\kappa
+\frac{\beta_0 (Gd(\mathbb{X}))^2T^{1-c}}{2}\nonumber\\
&\quad+\frac{2}{\beta_0}T^c\ln(\lfloor \kappa\log_2(1+P_T/d(\mathbb{X}))\rfloor),
\end{align*}
which gives \eqref{online_op:corollaryregequ1_alg1}.

\noindent {\bf (ii)} From \eqref{online_op:stepsize3}, we have
\begin{align}\label{online_op:varepsilon_5}
\varepsilon_{i,4}\le\varepsilon_5,~\forall i\in[N],
\end{align}
where $\varepsilon_5=\frac{4(d(\mathbb{X}))^2}{\alpha_0T^{1-c}}+\frac{2G^2\alpha_0(1+T)^\kappa}{(1-c)T^c}
+\frac{\gamma_0^2F^2}{\alpha_0T}+\frac{2\gamma_0^2F^2}{\alpha_0T^{1-c}}+2Gd(\mathbb{X})$.

From $\|[g(x)]_+\|$ is convex, \eqref{online_op:al_w}, and \eqref{online_op:al_x}, we have
\begin{align}\label{online_op:corollaryregequ1_alg1_proof5}
&\sum_{t=1}^{T} \|[g(x_{t})]_+\|=\sum_{t=1}^{T} \|[g(\sum_{i=1}^{N}w_{i,t}x_{i,t})]_+\|\nonumber\\
&\le\sum_{t=1}^{T}\sum_{i=1}^{N}w_{i,t} \|[g(x_{i,t})]_+\|\le\sum_{t=1}^{T}\sum_{i=1}^{N} \|[g(x_{i,t})]_+\|\nonumber\\
&=\sum_{i=1}^{N}\sum_{t=1}^{T} \|[g(x_{i,t})]_+\|.
\end{align}

From \eqref{online_op:corollaryconsequ_proof2_dr2}, \eqref{online_op:varepsilon_5}, and \eqref{online_op:corollaryregequ1_alg1_proof5}, we have
\begin{align*}
&\sum_{t=1}^{T}\|[g(x_{t})]_+\|\nonumber\\
&\le\sum_{i=1}^{N}\frac{\sqrt{m\alpha_02^{i-1}\varepsilon_5}}{\gamma_0}
\Big(T^{(1-c)/2}+\frac{1}{1-c}T^{(1-c)/2}\Big)\nonumber\\
&\le\frac{\sqrt{m\alpha_02^{N}\varepsilon_5}}{\gamma_0(\sqrt{2}-1)}
\Big(T^{(1-c)/2}+\frac{1}{1-c}T^{(1-c)/2}\Big)\nonumber\\
&
\le\frac{2\sqrt{m\alpha_0\varepsilon_5}}{\gamma_0(\sqrt{2}-1)}(T+1)^{\kappa/2}
\Big(T^{(1-c)/2}+\frac{1}{1-c}T^{(1-c)/2}\Big),
\end{align*}
which yields \eqref{online_op:corollaryconsequ_alg1}.

\end{document}